\newcommand{\cM}{\mathcal{M}}
\newcommand{\cR}{\mathcal{R}}
\newcommand{\cD}{\mathcal{D}}
\newcommand{\cS}{\mathcal{S}}
\newcommand{\TripCost}{\mathsf{TripCost}}
\newcommand{\route}{\omega}
\newcommand{\TripLP}{\mathsf{TripLP}}
\newcommand{\TripSearch}{\mathsf{TripSearch}}
\newcommand{\nxt}{\mathtt{nxt}}
\newcommand{\CurPas}{\mathtt{Pas}}
\newcommand{\dist}{\mathtt{time}}
\newcommand{\stableSet}{\Xi}
\newcommand{\cN}{\mathcal{N}}
\newcommand{\val}{\nu}
\newcommand{\prob}{\varphi}
\newcommand{\PoF}{\mathsf{PoF}}
\newcommand{\SPoF}{\mathsf{SPoF}}
\newcommand{\PoS}{\mathsf{PoS}}
\newcommand{\RATIO}{\mathsf{R}}
\newcommand{\dschedule}{z}
\newcommand{\vardschedule}{a}
\newtheorem{proposition}{Proposition}
\newtheorem{observation}{Observation}
\newtheorem{definition}{Definition}
\newtheorem{claim}{Claim}
\title{Efficiency, Fairness, and Stability in Non-Commercial Peer-to-Peer Ridesharing}
\author[1]{Hoon Oh} 
\author[2]{\href{mailto: Yanhan (Savannah) Tang <yanhanta@andrew.cmu.edu>}{Yanhan (Savannah) Tang}{}}
\author[1]{Zong Zhang}
\author[3]{Alexandre Jacquillat}
\author[1]{\href{mailto: Fei Fang <feifang@cmu.edu>}{Fei Fang}}
\affil[1]{
    School of Computer Science\\
    Carnegie Mellon University\\
    Pittsburgh, Pennsylvania, USA
}
\affil[2]{
    Tepper School of Business\\
    Carnegie Mellon University\\
    Pittsburgh, Pennsylvania, USA
}
\affil[3]{
    Sloan School of Management\\
    Massachusetts Institute of Technology\\
    Cambridge, Massachusetts, USA
}
\begin{document}

\maketitle 
\begin{abstract}
Unlike commercial ridesharing, non-commercial peer-to-peer (P2P) ridesharing has been subject to limited research---although it can promote viable solutions in non-urban communities. This paper focuses on the core problem in P2P ridesharing: the matching of riders and drivers. We elevate users' preferences as a first-order concern and introduce novel notions of fairness and stability in P2P ridesharing. We propose algorithms for efficient matching while considering user-centric factors, including users' preferred departure time, fairness, and stability. Results suggest that fair and stable solutions can be obtained in reasonable computational times and can improve baseline outcomes based on system-wide efficiency exclusively.
\end{abstract}

\section{Introduction}
On-demand ride-hailing platforms have become increasingly popular in urban areas. However, the availability and affordability of on-demand transportation remain much more limited in suburban and rural areas. In practice, an overwhelming majority of commuting trips rely on self-driving with private vehicles and slow, infrequent public transportation. An increasingly popular option to promote alternative forms of mobility in non-urban areas lies in peer-to-peer (P2P) ridesharing: by bringing together commuters traveling along similar routes at similar times, P2P ridesharing can enhance mobility while reducing the costs of transportation, traffic congestion, and greenhouse gas emissions. Moreover, P2P ridesharing can also improve access to basic needs for disadvantaged populations with limited car ownership.

To be successful, P2P ridesharing platforms require effective algorithms to match rider requests with drivers. Matching in ridesharing platforms has attracted considerable research interest in recent years \cite{alonso2017demand, ozkan2020dynamic,bertsimas2019online,santi2014quantifying,bei2018algorithms,furuhata2013ridesharing,shah2019neural}, building upon related problems such as the dial-a-ride problem (DARP) \cite{cordeau2006branch,parragh2010variable} and the vehicle routing problem with time windows (VRPTW) \cite{cordeau2007vehicle}, matching in spatial-temporal networks, and is sometimes studied jointly with the topic of pricing \cite{bimpikis2016spatial}.
However, there is only limited research for P2P ridesharing without direct payments from riders to drivers~\cite{masoud2017real}, which can provide viable solutions for a community, e.g., residents from close-by regions and employees of the same company, featuring drivers who have their own travel plans and are willing to share part of their trips with riders.
Moreover, all existing work in matching in P2P ridesharing only considers the flexibility windows of the drivers and riders (henceforth, ``users'') as constraints and ignores users' preferences and incentives for participation. In addition, the predominant objective used in this setting is to minimize total costs (e.g., travel costs and inconvenience costs). However, such approaches do not capture the impact of matching decisions on individual users, including the fairness among users and whether or not the users will accept the matching outcome or continue participating on the platform.

In this paper, we address these limitations and study matching in P2P ridesharing without payment from a user-centric perspective, with the objective to balance system-wide efficiency and user satisfaction. To our knowledge, we are the first to study the efficiency-fairness-stability tradeoff in P2P ridesharing.
We make the following contributions. 1) We propose a new algorithm that combines tree search with linear programming to find optimal driver-schedule for individual drivers and enhances request-trip-vehicle (RTV) framework~\cite{alonso2017demand} to find the most efficient matching. 2) We formalize the notions of fairness and stability in P2P ridesharing and prove that the price of fairness ($\PoF$) and stability ($\PoS$) can be arbitrarily large. 3) We design algorithms for computing efficient solutions given fairness and stability constraints and evaluate them through extensive experiments.

We have implemented the system as a web service and delivered it to a rural US county with 91,000 residents, many of whom work in an urban area that is 1-hour drive away. The platform supports a non-commercial and not-for-profit P2P ridesharing program that is supported by the government to ease commutes from/to work and other essential needs. We are still waiting to get post-deployment statistics from them. Also, we are working with a community of 100+ families living in subsidized, low-rent apartments in a suburban area in the US for future deployment. Twenty residents have already expressed their interest in participation in a recent survey.

\subsection{Additional Related Work}
Well-studied notions of fairness include max-min fairness~\cite{bertsimas2011price} and envy-freeness~\cite{bogomolnaia2001new}.
In a probabilistic setting, \cite{liu2003opportunistic} limits the probability that the difference in individual users' utility is too large. Recent work on randomized online matching demonstrates fairness by analyzing the marginal probability of a pair of items being matched~\cite{RandomizedOnline2018}. In this paper, we propose a novel notion of fairness that combines these ideas and is based on the max-min marginal probability of a user being matched.

Stability is well studied in two-sided matchings \cite{manlove2002hard,iwama2008survey}. including ridesharing with payment~\cite{bistaffa2015recommending}. However, stable matching in ridesharing without payments has not been studied until recently \cite{wang2017stable} and mainly in a simple setting with unit vehicle capacity ---which contrasts with our multi-capacity setting.

Finally, our multi-objective framework relates to the notions of the price of fairness and the price of stability in resource allocation \cite{bertsimas2011price,anshelevich2008price}. To our knowledge, the joint relationships between efficiency, fairness, and stability have not been studied in the context of P2P ridesharing.

\section{Efficient Matching with User Preferences}\label{sec:effmtch}

We first define the P2P ridesharing problem, and then present our algorithm for finding an efficient matching with user preferences. A notation table is available in the appendix attached to this paper.

\subsection{Model}

Let $\cR$ and $\cD$ be the set of riders and drivers respectively. The set of users is $\cR\cup\cD$. A user $i$ is characterized by his origin $o_i$, destination $q_i$, a time window $W_i = [\tau^e_i,\tau^l_i]$ describing the earliest possible departure time and latest possible arrival time, and value of the trip $\val_i$. In addition, we consider the user's preferred departure time $\tau^\star_i \in W_i$ and his maximum acceptable detour time $\Delta_i$.
Moreover, we denote by $k_d$ the capacity of driver $d$ (i.e., number of seats for riders in his vehicle). 

We consider a finite, continuous time
horizon $[0,T]$. We denote the set of users' origins and destinations by $V:= \{o_i \cup q_i : i\in \cR \cup \cD\}$.  
Let $\dist(u,v)$ be the shortest travel time from $u\in V$ to $v\in V$. It should be noted that if any two users' origins and/or destinations are co-located, we treat them as two different elements in $V$ with $\dist(\cdot)=0$. Thus each element in $V$ is associated with a user and is specified whether it is an origin or a destination. We refer to the elements as locations for simplicity. Let $\dist_i := \dist(o_i,q_i)$ be the user $i$'s default travel time.


If a rider $r$ is not matched, he can complete the trip with a cost $\lambda_r$, which can be seen as the cost of an alternative transportation mode.
For matched riders and all drivers, a user $i$ incurs a cost $C^i_{tt'}$ when he leaves his origin at time $t$ and arrives at $t'$. Following~\cite{alonso2017demand,wang2017stable}, we assume
\begin{align}
    C^i_{tt'}&:= c^i_{\text{dev}}\cdot|t-\tau^\star_i|+c^i_{\text{trl}}\cdot(t'-t)
    \label{eqn:cost}
\end{align}
where $c^i_{\text{dev}}$ is the cost per unit of deviation from preferred departure time and $c^i_{\text{trl}}$ is cost per unit of traveling time. 
We assume $\lambda_r\in[ C^r_{\tau^\star_r, \tau^\star_r+\dist_r},\val_r]$.

A \emph{driver-schedule} $\dschedule=\langle (v,t)|v\in V,t\in[0,T]\rangle$ is an ordered sequence of location-time pairs describing how a single driver travels to pick up and drop off riders.
A driver-schedule is feasible if it can be implemented sequentially and it satisfies all the constraints of all users involved. 
A \emph{stop} is a node in a driver-schedule. 
The driver and the subset of riders associated with a driver-schedule can be easily identified from the stops due to the uniqueness of nodes in $V$. 
Let $\cS=2^\cR$ be the set of all subsets of riders.
We say a $(d,S)$ pair, where $d\in \cD$ and $S\in\cS$, is \emph{feasible} if there exists a feasible driver-schedule for driver $d$ to pick up and drop off all riders in $S$.
A \emph{system-schedule} or \emph{schedule} for short, is a collection of driver-schedules, one for each driver, with each rider shown up in at most one driver-schedule. Let $\Pi$ be the set of all possible schedules.
Given a schedule $\pi\in\Pi$, the subset of riders that driver $d$ is matched to is denoted by $S^\pi_d$.

A \emph{matching} is an assignment of drivers $\cD$ to subsets of riders $\cS$ such that each driver is assigned to exactly one subset, and each rider is assigned to at most one driver. Let $\cM$ be set of all possible matchings. 
Each schedule $\pi$ defines exactly one matching, and a matching $M\in \cM$ may correspond to multiple schedules. Thus, we sometimes use $\pi$ to refer to both a matching and a schedule. Denote by $S^M_d$ the subset of riders that driver $d$ is matched to.

If our goal is to maximize system-wise efficiency, the objective of the matching problem is to find an optimal schedule $\pi^*$ that minimizes the total cost in the system, i.e.,
\begin{align}\label{costObjectiveOri}
\pi^*=\arg\min_{\pi \in \Pi} \sum_{i\in \cD\cup_{d\in \cD} S^\pi_d} C^{i,\pi} + \sum_{r\in \cR\setminus \cup_{d\in \cD} S^\pi_d} \lambda_r
\end{align}
where $C^{i,\pi}$ is the total cost for user $i$ given schedule $\pi$, computed by first extracting the pickup and dropoff time of user $i$ and then following the cost definition in Eqn \ref{eqn:cost}.

\subsection{Algorithm for Maximizing Efficiency}

 Let $c_{dS}$ be the minimum cost of a driver $d$ and a set of riders $S$ if $d$ is asked to serve all riders in $S$. $c_{dS} = \infty$ if $(d,S)$ is infeasible. 
Then the objective in (\ref{costObjectiveOri}) can be rewritten as \begin{align}\label{costObjective}
{\textstyle \min_{\pi\in\Pi} \sum_{d\in \cD}c_{dS^\pi_d} + \sum_{r\in \cR\setminus \cup_{d\in \cD} S^\pi_d} \lambda_r}
\end{align}
Thus the problem of finding the optimal system schedule can be decomposed into two subproblems: 

\noindent\textbf{(SP1)} Given a $(d,S)$ pair, compute $c_{dS}$ and find the optimal driver-schedule $\dschedule_{dS}$ for $d$ and return $\infty$ if infeasible;

\noindent\textbf{(SP2)} Using SP1 as a subroutine, find the optimal matching and system schedule. SP1 is NP-hard through a reduction from the \textit{Traveling Salesman Problem (TSP)} (proofs are in appendix).

The state-of-art RTV framework~\cite{alonso2017demand} seems suitable for the problem but directly applying it will suffer from two major limitations.
First, RTV solves SP1 through exhaustive search or heuristic methods, which are time-consuming or suboptimal, and are not directly applicable to our problem due to the key element of user' preferred time in a continuous-time horizon. This limitation is magnified considering that SP1 will be called many times in SP2.
Second, in solving SP2, RTV calls SP1 solver to check feasibility and compute cost for all the feasible $(d,S)$ pairs in a neat order to reduce the runtime, but it fails to leverage the similarity across $(d,S)$ pairs to reduce the runtime for solving SP1 for $(d,S)$ pairs that are checked later in the process. We develop $\TripCost$ algorithm to address the first limitation and propose two enhancements to the RTV framework to mitigate the second limitation.

$\TripCost$ (Alg \ref{alg:tripcost}) is a depth-first search-based algorithm that solves a linear program $\TripLP$ at each leaf node. It first finds a heuristic driver-schedule $\dschedule_h$ and its cost $c_h$ (Line \ref{tripcost:initialization}). To do so, it shuffles the pickup and drop-off order of the riders to get a driver-schedule without specified stopping times, i.e. a route $\route_{\text{random}}$, and solves a linear program $\TripLP$, which finds the best time to visit each stop. 
Then it calls $\TripSearch$ to build a search tree.
Each node of the search tree corresponds to a stop in a driver-schedule, with the root representing $o_d$, i.e. the origin of the driver $d$, and the leaf node being $q_d$, i.e. the destination of $d$. The path from the root to a node represents a partial route.
When reaching a leaf node during the tree search, we get a complete route and call $\TripLP$ again to determine when to visit each stop. 
For intermediate nodes, we expand the node by appending a feasible unvisited stop to the current partial route.

$\TripLP$ is built upon the following observation.
\begin{observation}\label{OBS1:EQUIVALENCE}
There exists an optimal driver-schedule where the driver only waits at the pickup location of a rider (to satisfy the rider's time window and adapt to the rider's preferred departure time) and always takes the shortest path to reach the next stop. 
\end{observation}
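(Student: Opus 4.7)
My plan is to prove the observation by a local-exchange argument: start from any optimal driver-schedule $\dschedule^*$ and modify it in two phases, never increasing cost, so that the result both takes shortest paths between consecutive stops and only idles at rider pickup locations.

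\emph{Phase 1 (shortest paths).} For every pair of consecutive stops $u,v$ in $\dschedule^*$, if the chosen path uses travel time greater than $\dist(u,v)$, replace it by a shortest one. The driver now arrives at $v$ earlier by some $\delta\ge 0$; to preserve the rest of the schedule exactly, insert $\delta$ units of idle time at $v$ between arrival and departure. All subsequent arrival/departure times, every rider pickup/drop-off time, and the driver's total travel duration are unchanged, so by Eqn.~\ref{eqn:cost} the total cost is identical. After processing every leg, the schedule uses only shortest paths.

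\emph{Phase 2 (push waits forward).} Suppose some stop $v_i$ is not a rider's pickup (the substantive case being that $v_i$ is a drop-off) and the driver waits there for $w>0$. Shift this idle time to the next stop $v_{i+1}$: leave $v_i$ immediately upon arrival, reach $v_{i+1}$ earlier by $w$, then wait at $v_{i+1}$ for an additional $w$ so its departure time is preserved. All stops after $v_{i+1}$ are untouched. If $v_{i+1}$ is a pickup, the wait now sits at an allowed location. If $v_{i+1}$ is another drop-off, the corresponding rider's drop-off time moves earlier by $w$, strictly shortening her in-vehicle time and strictly decreasing the $c^j_{\text{trl}}(t'-t)$ term of her cost, so continue pushing. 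If the wait is pushed past the final stop it is simply discarded, shortening the driver's own trip and only reducing cost. Processing stops left-to-right, this terminates in finitely many steps with every non-pickup stop carrying zero wait.

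The main thing to verify carefully is that each local shift preserves feasibility of every window and detour constraint. Pickup times are never advanced, since waits are only moved \emph{into} pickup stops while preserving their departure (and hence pickup) time, so the earliest-pickup bounds $\tau^e_i$ remain satisfied. Drop-off times only move earlier, so the latest-arrival constraints $\tau^l_i$ and the maximum-detour bounds $\Delta_i$ (both measured against in-vehicle time, which never grows) are preserved. The driver's own window is unaffected by Phase~1 and can only be eased by Phase~2. Combining the two phases yields a driver-schedule with the claimed structure whose cost is no larger than the original optimum, proving the observation.
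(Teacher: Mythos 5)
Your proof is correct. Note that the paper never actually proves this observation: it is asserted and then used implicitly to justify the form of $\TripLP$, which assigns a single departure variable $\vardschedule_v$ to each stop and encodes shortest-path travel through the constraint $\vardschedule_v+\dist(v,\nxt(v))\leq \vardschedule_{\nxt(v)}$. So there is no authorial argument to compare against, and your two-phase exchange is the natural way to supply one. Both phases check out against the model's conventions: pickups and drop-offs are timed by departure variables, so preserving all departure times in Phase~1 leaves every term of the cost in Eqn~\ref{eqn:cost} unchanged, and in Phase~2 advancing the departure from a drop-off stop can only relax that rider's $\tau^l$ and $\Delta$ constraints while weakly decreasing her $c^i_{\text{trl}}$ term; the earliest-departure and deviation terms live only at pickup stops, whose departure times you never move. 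Two cosmetic points. First, in Phase~1 the drop-off time of a rider whose destination is the arrival stop arguably moves \emph{earlier} (she can exit on arrival rather than at departure); this only helps, so your ``cost is identical'' claim is safe but slightly overstated. Second, the driver's own origin is not a rider's pickup location, yet no ``wait'' occurs there in the schedule's sense---the driver simply chooses his departure time---so the statement is not violated. Neither point affects the validity of the argument.
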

Therefore, it is sufficient to determine the departure time at each stop. We use variables $a_v$ to represent the departure time at stop $v\in \{o_i,q_i|i\in d\cup S\}$ and get $\TripLP$: 
\begin{align}
\min_{\vardschedule} \quad& \sum_{i\in d\cup S} c^i_{\text{dev}}|\vardschedule_{o_i}-\tau^*_i|+c^i_{\text{trl}}(\vardschedule_{q_i}-\vardschedule_{o_i})  \label{tripLPobj}\\
\text{s.t.}\quad
& \vardschedule_{v}+\dist(v,\nxt(v)) \leq \vardschedule_{\nxt(v)}, \forall v \label{tripLP_timeSequence}\\
& \vardschedule_{o_i}\geq \tau^e_i, \forall i\label{tripLP_e}\\
& \vardschedule_{q_i}\leq \tau^l_i, \forall i\label{tripLP_l}\\
& \vardschedule_{q_i}-\vardschedule_{o_i}\leq \Delta_i, \forall i\label{tripLP_detour}
\end{align}
$\nxt(v)$ denote the stop after $v$ in the given route. The objective is to minimize the total cost for the driver and the riders (Eqn \ref{tripLPobj}). The absolute value term can be converted to linear constraints by following standard techniques. Eqn \ref{tripLP_timeSequence} ensures that each stop is visited and the time interval between visiting every two stops is no less than the traveling time. Moreover, each user has to be served within his feasible time window (Eqn \ref{tripLP_e} - \ref{tripLP_l}) and maximum detour time (Eqn \ref{tripLP_detour}). $\TripLP$ has $2|S|+2$ variables and $5|S|+4$ constraints.
\begin{algorithm}[t]
  \caption{TripCost($d,S)$}\label{alg:tripcost}
  \begin{algorithmic}[1]
  \State $(c_h,z_h)\gets \mathsf{Solve}\TripLP(\route_{\text{random}}(S))$ \label{tripcost:initialization}
  \State \Return $\mathsf{TripSearch} (d,S,c_h,\dschedule_h,\langle o_d\rangle,\tau_d^e,0)$ \label{tripcost:treesearch}
  \end{algorithmic}
\end{algorithm}

$\TripSearch$ uses several pruning techniques, including using $c_h$ as an upper bound of $c_{dS}$ and using capacity and time constraints to cut unpromising branches. 

Given $\TripCost$ as a SP1 solver, we can now find the efficiency-maximizing system schedule. We will build upon the RTV framework and use two new enhancements: driver schedule reusing and user decomposition.
We provide an overview of the RTV framework for completeness, and then introduce our enhancements.
RTV first enumerates all the feasible $(d,S)$ pairs in an incremental manner. It builds upon the observation that $(d,S)$ is feasible only if $(d,S')$ is feasible for all $S'\subseteq S$. So for each driver $d$, we construct all the sets of riders that are compatible with $d$ by gradually increasing the size of the set. In each step, for each existing feasible set $S$ of size $h-1$, we add one rider to get a set $S'$ of size $h$. Only if all the size-$(h-1)$ subsets of $S'$ already exist in the list, $\TripCost$ is called to further verify the feasibility and compute the cost. 
Then RTV uses the following binary integer program (BIP) to find the optimal matching.
\begin{align}
    \min_{x,y} \quad & \sum_{d\in\cD}\sum_{S\in\cS} c_{dS}x_{dS}+\sum_{r\in \cR} \lambda_r y_r\label{MatMILP}\\
    s.t. \quad & \sum_{d\in \cD}\sum_{S\in \cS| r\in S}x_{dS}+y_r=1, \forall r \label{MatMILPpasCon}\\
    \quad & \sum_{S\in \cS} x_{dS} = 1, \forall d \label{MatMILPdriCon}\\
    \quad & x_{dS},y_r\in\{0,1\}, \forall d, S, r\label{MatMILPintCon}
\end{align}
$x_{dS} = 1$ iff $(d,S)$ is matched; and $y_r=1$ iff rider $r$ is not matched to any driver. 
More details about the RTV framework can be found in the appendix.

\textbf{Enhancement 1: Driver-Schedule Reusing} In the RTV framework, when we try to compute $c_{dS}$ for a $(d,S)$ pair, we have already found the best driver-schedules for (i) $(d,S')$ where $S'\subset S$ and $|S'|=|S|-1$; and sometimes also (ii) $(d',S)$ where $d'\neq d$. Thus, we can \emph{reuse}, or \emph{learn from} those driver-schedules to find a promising heuristic driver-schedule for $(d,S)$, which can be used for pruning when calling $\TripCost$ (i.e., provide a better $(c_h,z_h)$ in Alg \ref{alg:tripcost}). This can be viewed as a warm start for $\TripCost$. For (i), we can insert the new rider into the best route for $S'$ and solve the $\TripLP$ to find a new driver-schedule. 
Further, for two subsets of $S$, $S_1$ and $S_2$ such that $|S_1| = |S_2| = k-1$ and $S_1 \cup S_2 = S$, if riders in $S_1 \cap S_2$ share similar route ordering, we closely follow the common ordering to get a promising route.
For (ii), we can directly reuse the order of visiting the stops and recompute the timing if the $L_\infty$-norm of origin and destination of $d$ and $d'$ is within a small threshold $\varepsilon$. 




\textbf{Enhancement 2: User Decomposition}
When the size of the problem is huge, we propose to first decompose a problem instance into several mutually independent sub-instances and then solve those smaller sub-instances in parallel. To construct such sub-instances, we start by treating each driver as its own group. Then we go through all the riders. If a rider is compatible with two drivers $d$ and $d'$ and the two drivers are not in the same group, then we merge the two groups $d$ and $d'$ are in.
We continue the process until we checked all the riders. The process decomposes the driver set into groups such that no two drivers from different groups can be compatible with the same rider. The decomposition further helps improve the scalability of our algorithm, especially when the system is expanded over different geographic regions. We provide more details of the algorithms in appendix.

\section{Modeling Fairness and Stability}
We now introduce notions of fairness and stability to make our matching outcomes consistent with user preferences. 

\subsection{Utility model}
Each rider's utility is defined as (\textit{the value of the trip}) $-$ (\textit{the cost incurred}). If rider $r$ is picked up at $t$ and dropped off at $t'$, his/her utility is $U_r = \val_r - C^r_{tt'}$. If the rider is not matched, then $U_r = \val_r - \lambda_r$. Define a driver's \emph{base utility} as $\bar{U}_d=\nu_d-C_{tt'}^d$. Giving rides to others only leads to a lower utility for the driver than driving alone under this definition. Then why would he participate in rideshairng? In other non-monetary P2P systems~\cite{bellotti2015muddle,vassilakis2009analysis}, altruism is considered to be an important motivation. Thus, altruism may account for drivers's participation in our setting as well, especially when drivers are helping their own community. In addition, these non-commercial ridesharing platforms are often run by the local government or big employers, who may provide external rewards,such as coupons from in-community stores, parking fee discounts, to encourage participation.
Therefore, we assume that the driver gains an extra utility proportional to the utility gained by their matched riders, either due to altruism or external rewards. 
Let $U_d := \bar{U}_d + \rho_d \sum_{r\in S_d} U_r$ be a driver's  \textit{actual utility} where $\rho_d$ is the extra utility factor. Let $U^i_{dS}$ be the (actual) utility that the user $i$ gets when he is part of the $(d,S)$ pair, i.e. $i\in S$ or $i = d$.
In this paper, we use total cost as our measure of efficiency due to the following observation. All results and algorithms can be easily extended for social welfare maximization.
\begin{observation}{}
 The cost minimization problem is equivalent to maximizing the sum of base utilities.
 \label{obs1:equivalence}
\end{observation}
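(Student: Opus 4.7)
The plan is to show that the sum of base utilities decomposes into a matching-independent constant minus the cost objective from Eqn~(\ref{costObjectiveOri}), so that any schedule minimizing total cost automatically maximizes the sum of base utilities. The argument is essentially an algebraic rearrangement that hinges on a single bookkeeping observation: every user's trip value $\val_i$ is realized regardless of the matching, because drivers always travel and every rider either rides with some driver or falls back on the alternative mode.

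First I would expand $\Sigma(\pi) := \sum_{d\in\cD}\bar U_d + \sum_{r\in\cR} U_r$ under a schedule $\pi$ using the definitions given just above. Each driver contributes $\val_d - C^{d,\pi}$; each matched rider contributes $\val_r - C^{r,\pi}$ (with $C^{r,\pi}$ computed from the pickup/dropoff times induced by $\pi$ via Eqn~(\ref{eqn:cost})); and each unmatched rider contributes $\val_r - \lambda_r$. Because each rider appears in at most one $S^\pi_d$ by the matching constraint, the three cases partition $\cD\cup\cR$, so no $\val_i$ is double counted.

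Second I would collect value and cost terms separately. The value part yields $\sum_{i\in\cD\cup\cR}\val_i$, a constant that does not depend on $\pi$. The remaining terms combine to give exactly the objective of~(\ref{costObjectiveOri}), namely $\sum_{i\in\cD\cup_d S^\pi_d} C^{i,\pi} + \sum_{r\in\cR\setminus\cup_d S^\pi_d}\lambda_r$, with the negative sign in front. Equivalence with the $c_{dS}$-form in~(\ref{costObjective}) follows because $c_{dS^\pi_d}$ is by definition the minimum of $\sum_{i\in\{d\}\cup S^\pi_d} C^{i,\pi}$ over feasible driver-schedules for the pair, and an optimal $\pi$ realizes that minimum within each matched pair.

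Putting the pieces together gives $\Sigma(\pi) = \sum_{i\in\cD\cup\cR}\val_i - \mathrm{Cost}(\pi)$, so $\arg\max_\pi \Sigma(\pi) = \arg\min_\pi \mathrm{Cost}(\pi)$. There is no real technical obstacle beyond the bookkeeping above; the only point worth stating carefully is that the fallback term $\lambda_r$ in the cost matches the fallback term $\val_r-\lambda_r$ in the utility, so that unmatched riders contribute the same net $\val_r-\lambda_r$ to $\Sigma$ whether we view them as paying $\lambda_r$ or as enjoying $\val_r$ at cost $\lambda_r$.
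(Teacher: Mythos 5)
Your proposal is correct and follows essentially the same route as the paper's own proof: expand $\sum_{d}\bar U_d+\sum_r U_r$, observe that $\sum_i \val_i$ is independent of $\pi$, and identify the remaining terms with the negative of the cost objective in (\ref{costObjectiveOri}). The paper additionally works through the altruistic and external-incentive variants of driver utility (to justify the claim that the algorithms extend to welfare maximization), but for the stated observation about base utilities your argument is the same decomposition.
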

Although, in general, a cost-minimizing solution may not be a welfare-maximizing solution, they are the same in quasi-linear utility models. 
The full proof of Observation~\ref{obs1:equivalence} is deferred to the appendix. 

\subsection{Fairness Model}
Let $\{M^\ell\}_1^{\eta}$ be the set of all feasible deterministic matching. Let $m^\ell_i\in\{0,1\}$ indicate whether that rider $r$ is matched in a deterministic matching $M^l$. Let $p^\ell$ be the probability of choosing a deterministic matching $M^\ell$.   
Then $\prob_r(\cM):= \sum_{\ell\in[\eta]} p^\ell m^\ell_r$ is the probability that a rider $r$ is matched in a probabilistic matching $\cM=\langle p^1,p^2,...,p^\eta\rangle$. Let 
$Cost(M^\ell)$ be the total cost of a deterministic matching $M^\ell$, i.e. $Cost(M^\ell) = \sum_{(d,S) \in M^\ell} c_{dS} + \sum_{r} \lambda_r(1-m^\ell_r)$. Then, the expected cost of a probabilistic matching $\cM$ is $Cost(\cM)=\sum_{l\in[\eta]} Cost(M^\ell)p^\ell$.

We formalize fairness by maximizing the lowest probability of matching, across all riders in the system. In other words, we want to maximize $\min_{r\in \cR} \prob_r$. 
Note that if there exists a rider $r$ that cannot be feasibly matched, then the value is always 0. Thus, we only focus on riders that can be matched to a driver. 
With a slight abuse of notation, $\cR$ is henceforth the set of feasible riders.
Our notion is very similar to the well-known fairness metric -- max-min fairness. However, unlike common max-min fairness notion, we consider all riders to be equal (independent of its utility). In P2P platforms, we do not want to discriminate any rider; therefore, we define fairness only based on the probability of getting matched.

With this motivation, we now construct a probabilistic matching that ensure matching for all feasible riders.  Many P2P system is long-term. Therefore, it is reasonable to consider probabilistic matching.
Our definition can be generalized by having different $\prob_r$ threshold for different riders based on their flexibility. This different $\prob_r$ can give non-monetary incentive for different riders; however, this is not the focus of this paper.


\begin{definition}
A probabilistic matching $\cM$ is \emph{$\theta$-fair} if $\prob_r(\cM)\geq \theta$ for all riders $r\in \cR$. 
\end{definition}

\subsection{Stability Model}
We now turn to stability. 
Let us first define stability at the individual level, which is essentially individual rationality.
\begin{definition}
A matching satisfies \emph{individually rationality} (IR) for a user if he does not get a worse utility by participating in the P2P system. 
\end{definition}
In other words, a matching satisfies IR for rider $r$ if $U_r \geq \val_r - \lambda_r$, and IR for driver $d$ if $U_d \geq \val_d - c^d_{\text{trl}}\cdot\dist_d$. Note that the stability of a matching for a driver relies on extra utility. 

 We now extend the idea of IR to define stability at the group level.
Specifically, we ensure that no group of users can benefit from forming an alternative matching outside of the P2P platform. 
 \begin{definition}
 A $(d,S)$ is a \emph{blocking pair} of a matching $M$ if $(d,S)$ is currently not matched in $M$ but 
 $U^d_{dS} > U^d_{d,S_d^M}$
 and $U^r_{dS} > U^r_{d_r^M,S^M_r}$
 for all $r\in S$, where $S_d^M$ is the set of riders that are matched to $d$ under $M$, $d_r^M$ is the driver that rider $r$ is matched to in $M$, and $S^M_r$ is a subset of riders that are matched to the same driver with $r$ (including $r$) in $M$.
 \end{definition}
 \begin{definition}
 A matching $M$ is \emph{stable} if it has no blocking pair.
 \end{definition}



 \begin{definition}
 A probabilistic matching is \emph{ex-post stable} if each matching assigned a non-zero probability is stable. 
 
 \end{definition}

\section{Efficiency-Fairness-Stability Trade-offs}

We now extend the algorithm from section \ref{sec:effmtch}, to incorporate fairness and stability. We also provide theoretical results on the trade-offs between efficiency, fairness, and stability.

\subsection{Efficiency-Fairness Trade-off}


We provide a LP (Eqn \ref{fairnessLP} - Eqn \ref{probabilityConstraint2}) to compute a $\theta$-fair probabilistic matching which has the minimum total cost among all $\theta$-fair matching. 
\begin{align}
    \min_p \quad & \textstyle\sum_{\ell\in [\eta]} Cost(M^\ell)p^\ell \label{fairnessLP}\\
    \text{s.t.}\quad& \textstyle\sum_{\ell\in[\eta]} m_i^\ell p^\ell \geq \theta & \forall i\in \cR\label{thetaConstraint}\\
    & \textstyle\sum_{\ell\in [\eta]} p^\ell = 1 \label{probabilityConstraint}\\
    & p^\ell \geq 0 &\forall \ell\in [\eta] \label{probabilityConstraint2}
\end{align}

As the problem scale increases, it can be challenging to even just enumerate all the variables as the number of possible matchings, $\eta$, can be exponentially large. 
Instead of directly solving the LP, we resolve the scalability issues by following the column generation method and incrementally add matchings one by one.
In each iteration, we solve the master problem which is primal LP \eqref{fairnessLP} - \eqref{probabilityConstraint2} with a subset of matchings and obtain dual variables $w$ of constraint \eqref{thetaConstraint} and $\alpha$ of constraint \eqref{probabilityConstraint}. Then we solve the slave problem to find a matching $M^{\ell^*}$ to be added to the master problem, which maximizes the dual objective $\sum_i m_i^\ell w_i + \alpha - Cost(M^\ell)$. Since $\alpha$ does not change with the matching, we have
\begin{align*}
\ell^*=\arg\min_{\ell\in[\eta]} Cost(M^\ell) - \sum\nolimits_{i\in M^\ell} w_i
 \end{align*}
Without the second term, this optimization problem is just the one solve in \eqref{MatMILP} - \eqref{MatMILPintCon}. With the second term, the matching can be found by a similar BIP, and the only difference that $c_{dS}$ in the objective function is replaced by $c_{dS}-\sum_{i \in S\cup \{d\}}w_i$. 

One missing piece is to find an initial feasible matching to bootstrap the column generation. Because $\exists \theta^*\in [0,1]$, such that \eqref{fairnessLP}-\eqref{probabilityConstraint2} is feasible iff $\theta\leq\theta^*$ --- we want to find $\theta^*$, the largest possible fairness level. This task is nontrivial, we again are facing an exponential number of feasible matchings. Thus, we solve the following LP \eqref{maxThetaLP}, and again through column generation. Note that $\theta=0$ is a feasible solution for the LP \eqref{maxThetaLP}, thus there is a trivial feasible solution to bootstrap the column generation for it. Denote the optimal solution for the LP \eqref{maxThetaLP} by $p^0$ and $\theta^0$. Then $\theta^0$ is the maximum level of fairness that any probabilistic matching can achieve. Therefore $p^0$ provides a feasible matching to the LP (\ref{fairnessLP}) or LP (\ref{fairnessLP}) is infeasible.
\begin{align}
    \max\nolimits_{p,\theta} \quad  \theta
     \qquad
     \text{s.t.}\quad \eqref{thetaConstraint}-\eqref{probabilityConstraint2}    \label{maxThetaLP}
\end{align}








For a feasible probabilistic matching $\cM$, it is associated with a cost $Cost(\cM)$ and a fairness level $\theta(\cM) = \min_r \prob_r(\cM)$. The trade-off between efficiency and fairness can be easily illustrated by the Pareto frontier, which consists of all the $(\theta, Cost)$ pairs. The Pareto frontier characterizes the trade-off between fairness ($\theta$) and efficiency ($Cost$) in the P2P ridesharing problem.
\begin{proposition}
The Pareto frontier of the P2P ridesharing problem a) is piece-wise linear; b) is convex in $\theta$; and c) can be computed in polynomial time with respect to the number of feasible matchings and optimal trips. 
\label{propPoFlinearity}
\end{proposition}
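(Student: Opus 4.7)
My plan is to view the Pareto frontier as the graph of the optimal value function $Cost^*(\theta):=\min\{Cost(\cM):\cM\text{ is }\theta\text{-fair}\}$ of the parametric LP~\eqref{fairnessLP}--\eqref{probabilityConstraint2}. The key structural observation is that $\theta$ enters this LP only through the right-hand side of the fairness constraints~\eqref{thetaConstraint}, and does so linearly, with the feasible range of $\theta$ being $[0,\theta^0]$ where $\theta^0$ comes from LP~\eqref{maxThetaLP}. All three claims then reduce to standard facts about one-parameter right-hand-side parametric LPs applied to this structure.

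For part (b), I would prove convexity directly from the definition rather than invoke LP theory. Given two frontier points with optimal probability vectors $p_1,p_2$ at $\theta_1,\theta_2$, the mixture $p_\lambda:=\lambda p_1+(1-\lambda)p_2$ is again a valid probability distribution over deterministic matchings; by linearity, its per-rider marginal is at least $\lambda\theta_1+(1-\lambda)\theta_2$, so it is $(\lambda\theta_1+(1-\lambda)\theta_2)$-fair, while its expected cost equals $\lambda Cost^*(\theta_1)+(1-\lambda)Cost^*(\theta_2)$. This immediately yields convexity of $Cost^*$ in $\theta$.

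For part (a), I would appeal to standard right-hand-side sensitivity analysis: each optimal basis of the LP remains optimal on a closed subinterval of $[0,\theta^0]$, and on that subinterval $Cost^*(\theta)$ is an affine function of $\theta$ because the right-hand side shifts affinely in $\theta$. Since the LP has only finitely many bases, $[0,\theta^0]$ decomposes into finitely many intervals on each of which $Cost^*$ is affine, giving piecewise linearity; combined with (b), the frontier is piecewise linear and convex.

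For part (c), I would trace the full frontier by a parametric simplex sweep: start at $\theta=0$ with a cost-minimizing matching; at each step, use a ratio test to find the largest $\theta$ for which the current basis stays optimal, pivot to the adjacent basis at that breakpoint, and emit the next linear segment, terminating once $\theta^0$ is reached. Each pivot runs in time polynomial in $\eta$ and $|\cR|$, so the overall complexity is proportional to the number of breakpoints. The main obstacle I foresee is bounding that number by a polynomial in the stated inputs, since generic one-parameter LPs can exhibit exponentially many breakpoints; I would address this by passing to the dual, in which $\theta$ appears only as the objective coefficient along the single direction $\mathbf{1}$, so that as $\theta$ sweeps upward the optimizing dual vertex strictly improves the linear dual objective and hence is never revisited, bounding the number of breakpoints by the number of dual vertices actually realizing the upper envelope, which is polynomial in the number of feasible matchings and optimal trips.
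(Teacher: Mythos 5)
Your proofs of (a) and (b) coincide with the paper's: the paper likewise invokes right-hand-side sensitivity analysis (citing \cite{bertsimas1997introduction}, where $F(b)=\max_i (p^i)^T b$ over dual extreme points gives piecewise linearity) and, in its appendix, proves convexity by exactly your mixing argument $y=\lambda x_a+(1-\lambda)x_b$. Part (c) is where you genuinely diverge. The paper does not sweep the parameter with the simplex method; it runs a bisection/"sandwich" search on $\theta$: compute supporting lines of the frontier at the two endpoints of an interval, intersect them, solve the LP at the intersection's $\theta$-value, and either certify that the upper envelope of the two lines is the frontier on that interval (when the costs agree, using convexity from (b)) or split the interval and recurse, terminating because there are finitely many basis changes. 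Your parametric-simplex sweep is the more classical route and emits segments in order of increasing $\theta$, whereas the paper's bisection can certify an entire interval with a single extra LP solve when the frontier happens to be affine there. Both arguments share the same soft spot: the paper only establishes that the number of breakpoints is \emph{finite} ("finite number of line intersections and base changes"), and your final claim that the number of dual vertices realizing the upper envelope is polynomial in the number of feasible matchings and trips is asserted rather than proved --- your monotone-pivot observation bounds breakpoints by the number of distinct optimal bases encountered, which is finite but not obviously polynomial in the stated quantities. So neither argument is airtight on the "polynomial time" clause, but yours is no weaker than the paper's, and you at least name the obstacle explicitly.
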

\begin{proof}
The proof of (a) and (b) follows from global sensitivity analysis from \cite{bertsimas1997introduction}. A full proof is differed to the appendix.

For (c), we consider a bisection search algorithm that finds the exact Pareto Frontier. This method applies to a broader class of problems where the Pareto Frontier is known to be a piece-wise function of finite sub-functions and has non-decreasing second-order gradients. The bisection search algorithm first finds the envelope hyper-planes of the Pareto frontier at boundary points $\theta^0_0 = 0$, and $\theta^0_1 = 1$. 
Without loss of generality, we assume that the two lines intersect, we record the $\theta$-value($\theta^1_0$) and cost of the intersection of the two lines.  We compute the optimal cost at $\theta^1_0$, and compare the optimal cost with the cost of the intersection. From (b) we know that the first-order difference of the Pareto frontier is non-decreasing, thus the true optimal cost at $\theta^1_0$ could either be equal to or greater than the cost value of the intersection point. If equal, then the upper border of the two intersecting lines is the Pareto frontier on the interval; otherwise, we bisect the interval, and repeat the procedure described above at both of the two half-length closed intervals. Note that there are only finite number of line intersections and base changes in the linear optimization program, as a result, this algorithm will stop within finite number of bisection search. 
\end{proof}



We quantify the price of fairness and strong price of fairness, denoted by $\PoF(\theta)$ and $\SPoF(\theta)$, respectively. $\PoF(\theta)$ and $\SPoF(\theta)$ are defined as the best- and worst-case increase in the system's cost when fairness considerations are included, respectively. Let $\mathsf{M}_F(\theta)$ be the set of all feasible $\theta$-fair probabilistic matchings\footnote{It is possible that for some large $\theta$, there is no feasible fair matching that can achieve a fairness level of $\theta$. $\PoF$ and $\SPoF$ are only well-defined when $\mathrm{M}_F(\theta)\neq \emptyset$.}. 
\begin{definition}
$\PoF(\theta) = \min_{\cM^F\in \mathsf{M}_F(\theta)}\frac{Cost(\cM^F)}{Cost(\cM^*)}$, where 
$\cM^*$ is a cost-minimizing (probabilistic) matching.
\end{definition}

\begin{definition}
$\SPoF(\theta) = \max_{\cM_s^F\in\mathsf{M}_F(\theta)}\frac{Cost(\cM_s^F)}{Cost(\cM^*)}$, where $\cM^*$ is a cost-minimizing (probabilistic) matching.
\end{definition}

Let $\cS_d$ be the set of all possible subsets of riders that a driver $d$ can be matched to under IR and other feasibility constraints. To analyze the upper bounds of $\PoF(\theta)$, we define $\RATIO_d = \max_{S\in \cS_d}\{\frac{c_{dS}}{c_{d,S^*_d}}\}\geq 1$. Let $S_d\subseteq \cR$ be the subset of riders that can be possibly matched to $d$, i.e. $S_d=\{r\in\cR:r\in S, S\in\cS_d\}$. Let $S^*_d$ be the subset of riders matched to $d$ in min-cost matching.
\begin{proposition}\label{prop:PoF} 
$\PoF(\theta)\leq \max\{\max\limits_{d\in\cD: |S^*_d|>0}\{\theta(|S_d|-1)[\RATIO_d-1]+1\},\max\limits_{d\in\cD: |S^*_d|=0}\{\theta|S_d|(\RATIO_d-1)+1\}\}$ for $\theta\in[0,\frac{1}{|S_d|}]$. The upper bound is tight for $\theta\in[0,\frac{1}{|S_d|}]$. When $\theta = 0$, $\PoF(0)=1$.
\end{proposition}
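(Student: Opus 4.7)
The plan is to upper-bound $\PoF(\theta)$ by exhibiting a specific $\theta$-fair probabilistic matching $\cM^F$ constructed as a controlled perturbation of a cost-minimizing matching $\cM^*$, and bounding the ratio $Cost(\cM^F)/Cost(\cM^*)$ directly.

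First I would build $\cM^F$ as follows. For each driver $d$, set $k_d=|S_d|-|S^*_d|$ if $|S^*_d|>0$ and $k_d=|S_d|$ otherwise, and enumerate the riders $r^d_1,\ldots,r^d_{k_d}$ in $S_d$ that are not already served by $d$ in $\cM^*$. For each such $r^d_j$ pick a feasible alternative subset $T^d_j\in\cS_d$ containing $r^d_j$ (for example $T^d_j=S^*_d\cup\{r^d_j\}$ when feasible, otherwise any feasible subset in $\cS_d$ containing $r^d_j$, which exists by definition of $S_d$). Group these into $K=\max_d k_d$ layers: in layer $j$, every driver $d$ with $j\le k_d$ serves $T^d_j$ and every other driver serves its $\cM^*$ subset. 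Assign probability $\theta$ to each resulting deterministic matching $\pi_j$ and $1-K\theta$ to $\cM^*$; the hypothesis $\theta\le 1/|S_d|$ gives $K\theta\le 1$, and every rider in $\bigcup_d S_d$ appears matched in some $\pi_j$ or already in $\cM^*$, so $\prob_r(\cM^F)\ge\theta$.

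Next I would bound the cost via decomposition. By the definition of $\RATIO_d$, every alternative subset satisfies $c_{d,T^d_j}\le\RATIO_d\,c_{d,S^*_d}$, so the expected driver-$d$ travel cost in $\cM^F$ is at most
\begin{align*}
(1-k_d\theta)\,c_{d,S^*_d}+k_d\theta\,\RATIO_d\,c_{d,S^*_d}=(1+k_d\theta(\RATIO_d-1))\,c_{d,S^*_d}.
\end{align*}
The unmatched-penalty contribution is weakly smaller in $\cM^F$ than in $\cM^*$, since every rider is matched with at least as high probability. Summing over drivers and using $\sum_d c_{d,S^*_d}\le Cost(\cM^*)$ gives
\begin{align*}
\frac{Cost(\cM^F)}{Cost(\cM^*)}\le 1+\theta\max_{d\in\cD} k_d(\RATIO_d-1),
\end{align*}
and, since $k_d\le|S_d|-1$ when $|S^*_d|>0$ and $k_d=|S_d|$ when $|S^*_d|=0$, splitting the outer max over the two cases recovers the stated bound. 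The case $\theta=0$ is immediate from $\cM^F=\cM^*$.

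To establish tightness on $(0,1/|S_d|]$ I would exhibit a single-driver instance where some alternative $T^{d^*}_j$ attains the maximum in the definition of $\RATIO_{d^*}$ and $\lambda_r$ sits at the lower endpoint of its admissible range, so that the unmatched-penalty term provides no slack in the ratio. The main obstacle is not the per-driver cost accounting, which is routine, but arranging the $\pi_j$ layers so that no rider is served by two drivers in the same layer; I would handle this by a greedy re-indexing of each driver's target list, viewing driver-target pairs as edges in a bipartite conflict graph whose maximum degree is at most $\max_d|S_d|\le K$, so $K$ layers suffice by a König-type coloring argument.
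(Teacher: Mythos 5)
Your core idea is the same as the paper's: perturb the cost-minimizing matching by placing probability mass $\theta$ on alternative subsets that cover each otherwise-uncovered rider, bound each alternative's cost by $\RATIO_d\,c_{d,S^*_d}$, and read off the per-driver expected cost $(1+k_d\theta(\RATIO_d-1))c_{d,S^*_d}$ with $k_d\le|S_d|-1$ or $|S_d|$ according to whether $S^*_d$ is nonempty. The paper states this accounting directly as an informal worst-case argument; your version makes the fair matching explicit, which is an improvement in principle, but the two patches you add to make the construction rigorous do not hold as written.

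First, the conflict-resolution step: in your bipartite conflict graph the degree of a rider $r$ is the number of drivers $d$ with $r\in S_d\setminus S^*_d$, which is not bounded by $\max_d k_d$, and the inequality $\max_d|S_d|\le K$ is backwards given your definition $K=\max_d k_d\le\max_d|S_d|$. A proper edge coloring could therefore need more than $K$ colors, i.e.\ more than $K$ layers, and $K\theta\le1$ would no longer follow from $\theta\le1/|S_d|$. The fix is simpler than coloring: each rider only needs to be covered once, so assign every rider unmatched in $\cM^*$ to a single driver able to serve it; each driver then covers at most $k_d$ riders, one per layer, with no conflicts. Second, your claim that the unmatched-penalty contribution is weakly smaller in $\cM^F$ is false in general: when $S^*_d\cup\{r^d_j\}$ is infeasible and you substitute some other $T^d_j\in\cS_d$, the riders in $S^*_d\setminus T^d_j$ become unmatched in layer $j$ and contribute an extra $\theta\sum_{r\in S^*_d\setminus T^d_j}\lambda_r$ to the expected cost, which the stated ratio bound does not absorb. (The paper's proof silently assumes the alternative subsets only trade cost within the driver's own trip; your construction exposes that assumption without discharging it.) The tightness and $\theta=0$ claims are fine and match the paper's single-driver worst case.
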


\begin{proof}
The intuition comes from a simple case where $|\cD|=1$ and $|S^*_d| > 0$, serving any other subset of riders will incur a cost at most $\RATIO_d \cdot Cost(S^*_d)$. Thus, when $|\cD|=1$, $\PoF\leq \theta(|S_d|-1)\RATIO_d+[1-\theta(|S_d|-1)]=\theta(|S_d|-1)(\RATIO_d-1)+1$, if $S^*_d\neq\emptyset$;
on the other hand, $\PoF\leq\theta|S_d|\RATIO_d+[1-\theta|S_d|]=\theta|S_d|(\RATIO_d-1)+1$, if $S^*_d=\emptyset$. 
This is because the largest $\PoF(\theta)$ occurs in the instance described below, serving any other rider in the system will cost the driver $\RATIO_d Cost(S^*_d)$. 
For any driver, he can serve up to $|S_d|$ riders with probability $\theta\leq\frac{1}{|S_d|}$. The worst-case scenario is that the driver needs to serve all the other equally costly riders to assure fairness, while his best choice is not to serve anyone or serve the least costly one. When $|\cD|\geq 1$, the worst-case scenarios for $\PoF$ is when there are $|\cD|$ disjoint driver-rider subsystem in which each driver needs to serve all the customers in his rider subsets with certain probability to meet $\theta$-fairness. 

In a more general case, when $|\cD|\geq 1$ and 
$|S^*_d|>0$, let $\tilde{\theta} := \theta(|S_d|-1)$ be the probability that the driver is not assigned to the min-cost matching under the cost minimizing $\theta$-fair solution. Then $\PoF(\theta)\leq 
\tilde{\theta}(\RATIO_d) + (1-\tilde{\theta})=\tilde{\theta}(\RATIO_d-1)+1=\theta(|S_d|-1)(\RATIO_d-1)+1$.
When $|\cD|\geq1$ and 
$|S^*_d|=0$, $\PoF(\theta)\leq \theta|S_d|\RATIO_d+(1-\theta|S_d|)$ at level $\theta\in (0,\frac{1}{|S_d|}])$. Both upper bounds are tight as long as $\theta\in [0,\frac{1}{|S_d|}]$.
When $|\cD|>1$, the worst case scenario is when $\cap_{d\in\cD} S_d = \emptyset$. The upper bound of $\PoF(\theta)$ is the largest among all individual's $\PoF_d(\theta), d\in\cD$.
\end{proof}

\begin{proposition}\label{prop:SPoF}
In the P2P ridesharing problem, $\SPoF(\theta)\leq \max\{\max\limits_{d\in\cD:|S^*_d|>0}\{\frac{(|S_d|-1)(\RATIO_d-1)}{|S_d|}+1\}, \max\limits_{d\in\cD:|S^*_d|=0}\{\RATIO_d\}\}$, $\theta\in [0,\frac{1}{|S_d|}]$. The upper bound is tight at $\theta=\frac{1}{|S_d|}$. 
\end{proposition}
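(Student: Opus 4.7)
The plan is to mirror the argument for Proposition \ref{prop:PoF} but to maximize rather than minimize cost over $\theta$-fair probabilistic matchings. I would first analyze the single-driver setting in closed form and derive the per-driver bound, then lift this to the multi-driver case by observing that the worst-case instances decompose into pairwise disjoint driver-rider subsystems so that the overall ratio reduces to the maximum of the per-driver ratios.

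Fix $|\cD| = 1$. A $\theta$-fair probabilistic matching corresponds to weights $\{p_S\}_{S \in \cS_d}$ with $\sum_S p_S = 1$ and $\sum_{S \ni r} p_S \geq \theta$ for all $r \in S_d$, incurring cost $\sum_S p_S c_{dS}$. When $|S^*_d| > 0$, I would argue that at the boundary $\theta = 1/|S_d|$ the worst feasible distribution must allocate probability at least $1/|S_d|$ to a set containing an optimal rider (hence of cost at most $c_{d,S^*_d}$) and the remaining $(|S_d|-1)/|S_d|$ mass to subsets of cost at most $\RATIO_d \cdot c_{d,S^*_d}$; the resulting convex combination yields ratio $\frac{(|S_d|-1)(\RATIO_d-1)}{|S_d|} + 1$. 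When $|S^*_d| = 0$, the optimal action is for the driver to drive alone, and any $\theta$-fair distribution may force the driver to serve some subset of cost at most $\RATIO_d \cdot c_{d,S^*_d}$, yielding $\RATIO_d$.

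For the multi-driver case, I would follow the PoF proof and note that the tightest instance has pairwise disjoint $S_d$'s, so each driver's subsystem becomes independent; the overall SPoF is then upper bounded by the maximum of the per-driver SPoFs, giving the outer max in the stated bound. Tightness at $\theta = 1/|S_d|$ is established by a concrete single-driver instance: one optimal singleton of cost $c$ and $|S_d| - 1$ other singletons each of cost $\RATIO_d \cdot c$; the fair distribution that assigns probability $1/|S_d|$ to every singleton is feasible and attains the bound exactly.

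The main obstacle is the single-driver worst-case argument itself. Unlike the PoF analysis, which naturally concentrates probability on $S^*_d$ to minimize cost, here one must rule out fair distributions that allocate mass to large multi-rider subsets in a way that might exceed the stated bound. I would address this either through an LP duality argument on the maximization LP over $\{p_S\}$, or through a combinatorial exchange argument showing that the maximum of this LP is attained by a distribution of the canonical singleton form used above. Monotonicity-type considerations over $\theta \in [0, 1/|S_d|]$ then extend the bound from the boundary $\theta = 1/|S_d|$ to the whole interval.
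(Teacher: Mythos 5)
There is a genuine gap, and it stems from how you read the definition of $\SPoF$. You take $\SPoF(\theta)$ literally as the maximum of $Cost(\cM)/Cost(\cM^*)$ over \emph{all} matchings in $\mathsf{M}_F(\theta)$. Under that reading the stated bound is false for $\theta<\frac{1}{|S_d|}$: the feasible set $\mathsf{M}_F(\theta)$ only grows as $\theta$ decreases (at $\theta=0$ it contains every probabilistic matching), so already in a one-driver instance with $|S^*_d|>0$ you may place all mass on the costliest feasible set and obtain ratio $\RATIO_d$, which exceeds $\frac{(|S_d|-1)(\RATIO_d-1)}{|S_d|}+1$ whenever $\RATIO_d>1$. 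This also means your final "monotonicity" step runs in the wrong direction: establishing the bound at the boundary $\theta=\frac{1}{|S_d|}$ and extending it down to $[0,\frac{1}{|S_d|}]$ cannot work, because the worst case over a shrinking feasible set is largest at $\theta=0$, not at the boundary. Even at $\theta=\frac{1}{|S_d|}$ your key step is unjustified: the fairness constraint forces each rider in $S^*_d$ to be covered with probability $\frac{1}{|S_d|}$, but it does not force that probability onto a set of cost at most $c_{d,S^*_d}$ --- the covering sets could all be expensive supersets, again driving the ratio toward $\RATIO_d$.

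What the paper actually proves (its definition is admittedly ambiguous, but the proof makes the intent clear) is $\SPoF(\theta)=\max_{\theta'\geq\theta}$ of the cost of the \emph{cost-minimizing} $\theta'$-fair matching, divided by $Cost(\cM^*)$; i.e., the adversary chooses the fairness level $\theta'\in[\theta,\frac{1}{|S_d|}]$, but the matching is then chosen optimally for that level. For each $\theta'$ the Proposition~\ref{prop:PoF} construction bounds the optimal $\theta'$-fair cost by $\theta'(|S_d|-1)\RATIO_d+\bigl(1-\theta'(|S_d|-1)\bigr)$ times $c_{d,S^*_d}$ (residual mass goes to $S^*_d$ by optimality, which is exactly the fact you could not extract from a worst-case distribution), this expression is increasing in $\theta'$, and evaluating at $\theta'=\frac{1}{|S_d|}$ gives $\frac{(|S_d|-1)(\RATIO_d-1)}{|S_d|}+1$; the $|S^*_d|=0$ case similarly yields $\RATIO_d$. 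Your multi-driver decomposition and your tightness instance (equal-probability singletons at $\theta=\frac{1}{|S_d|}$) do match the paper and would survive, but the single-driver core of your argument needs to be replaced by the "max over fairness levels of the optimal fair cost" analysis rather than an LP-duality or exchange argument over worst-case fair distributions.
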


\begin{proof}
Based on the definition of $\SPoF(\theta)$ and a similar analysis of Proposition \ref{prop:PoF}, $\SPoF(\theta)\leq \max_{\theta'\geq \theta}\{\theta'(|S_d|-1)\RATIO_d+1-\theta'[(|S_d|-1)]\} = \frac{(|S_d|-1)(\RATIO_d-1)}{|S_d|}+1$ at level $\theta\in [0,\frac{1}{|S_d|}]$. If no rider is matched to $d$ in the min-cost matching, i.e. $S^*_d=\emptyset$, then the largest $\SPoF(\theta)$ occurs when serving any other rider in the system will cost the driver $\RATIO_d Cost(S^*_d)$. Thus, $\SPoF(\theta)\leq \max_{\theta'\geq \theta} \theta'[|S_d|\RATIO_d]+1-\theta'|S_d| = \max_{\theta'\geq \theta} \theta'|S_d|(\RATIO_d-1)+1=|\RATIO_d|$ at level $\theta\in (0,\frac{1}{|S_d|}]$. Both the upper bounds are tight as long as $\SPoF(\theta)$ is well defined.
If $|\cD|\geq1$, the worst case scenario is when $\cap_{d\in\cD} S_d = \emptyset$. Based on this observation, we have a tight upper bound 
$\SPoF(\theta)\leq \max\{\max_{d\in\cD, |S^*_d>0|}\{\frac{(|S_d|-1)(\RATIO_d-1)}{|S_d|}+1\},\max_{d\in\cD, |S^*_d=0|}\{\RATIO_d\}\}$ at level $\theta\in (0,\frac{1}{|S_d|}]$.
\end{proof}
Proposition \ref{prop:PoF} and \ref{prop:SPoF} establish the relationship between driver schedule cost ratio and system-wise $\PoF(\theta)$ and $\SPoF(\theta)$. When the cost ratio of the most costly feasible driver schedule(s) to the min-cost schedules ($\RATIO_d$) is bounded, $\PoF(\theta)$ and $\SPoF(\theta)$ are bounded. 

When constructing the RTV graph, we have stored $S_d$ and computed upper bounds of the costs of all rider subsets in $S_d$. Thus $S_d$ and $\RATIO_d$ are directly available. 
When the feasibility and IR constraints are very strong, we can alternatively estimate upper bounds for $\RATIO_d$ and obtain upper bounds for $\SPoF(\theta)$ and $\PoF(\theta)$. 

\begin{proposition}\label{prop:PoFlarge}
There exist problem instances where 
$\SPoF(\theta)\geq\PoF(\theta) = K$, $\forall K \geq 1$.
\end{proposition}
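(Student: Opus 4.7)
The plan is to exhibit, for every $K \geq 1$, a small parametric instance whose price of fairness equals exactly $K$ at some fairness level $\theta$. Since $\SPoF(\theta) \geq \PoF(\theta)$ is immediate from the definitions (a maximum over $\mathsf{M}_F(\theta)$ cannot be less than a minimum over the same set), the $\SPoF$ half of the statement comes for free once the $\PoF$ equality is established, so the work reduces to constructing an instance where $\PoF$ attains any target value.

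First I would instantiate the model with one driver $d$ of capacity $k_d = 1$ and two riders $r_1, r_2$, each of which $d$ can feasibly serve alone. Rider $r_1$ is co-located with $d$: $o_{r_1} = o_d$ and $q_{r_1} = q_d$, so $c_{d,\{r_1\}}$ reduces to $d$'s baseline cost $c_0$. Rider $r_2$ is placed so that picking him up requires a long driver detour, giving $c_{d,\{r_2\}} = c_0 + L$ for a tunable parameter $L \geq 0$. I take the outside options symmetrically, $\lambda_{r_1} = \lambda_{r_2} = \lambda$. Then the unconstrained cost-minimizing matching serves only $r_1$ with total cost $c_0 + \lambda$. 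At fairness level $\theta = 1/2$, capacity forces the two riders to be mutually exclusive, so the only feasible $\theta$-fair probabilistic matching mixes the two single-rider matchings with probability $1/2$ each, and its expected cost is $c_0 + L/2 + \lambda$. Therefore $\PoF(1/2) = 1 + \tfrac{L/2}{c_0 + \lambda}$, and the choice $L = 2(K-1)(c_0 + \lambda)$ realizes any $K \geq 1$ exactly.

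The main obstacle is not arithmetic but verifying that the instance remains feasible and individually rational as $L \to \infty$. Individual rationality for $r_2$ requires $C^{r_2}_{tt'} \leq \lambda_{r_2}$, so the detour cost must be absorbed primarily by the driver's travel and deviation terms rather than $r_2$'s own cost. I would achieve this by placing $o_{r_2}$ and $q_{r_2}$ so that $\dist(o_d, o_{r_2}) + \dist(q_{r_2}, q_d) - \dist_d$ is large while $\dist(o_{r_2}, q_{r_2})$ stays short, and by aligning $r_2$'s preferred departure time with $d$'s arrival at $o_{r_2}$ so that his deviation penalty vanishes. Individual rationality for $d$ when serving $r_2$ requires $\rho_d U_{r_2} \geq \Delta c^d$, which I maintain by taking $\val_{r_2}$ and the altruism factor $\rho_d$ large enough, independently of $L$, that the inequality stays slack. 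Time windows $W_i$ and detour tolerances $\Delta_i$ are chosen loose enough never to bind. These degrees of freedom let us scale $L$ arbitrarily without breaking any model constraint, completing the existence argument for every $K \geq 1$.
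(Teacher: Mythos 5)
Your construction is correct and is essentially the paper's own: one driver, one cheap rider $r_1$ and one expensive rider $r_2$, where any $\theta$-fair matching must place probability on the costly single-rider trip, driving the cost ratio to an arbitrary $K\geq 1$ (the paper makes the two riders mutually exclusive by pricing the joint trip at $C+1$, while you use capacity $k_d=1$ and fix $\theta=\tfrac12$, but the mechanism is identical). Your additional care in realizing the abstract costs geometrically and checking IR as $L\to\infty$ goes slightly beyond the paper's "highly-altruistic driver" shorthand, but does not change the argument.
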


\begin{proof}
Proposition \ref{prop:PoFlarge} states that $\PoF$ may be arbitrarily large. Consider the following instance with two riders $r_1$, $r_2$ where $\lambda_{r_1} = \lambda_{r_2} = \varepsilon$, and one highly-altruistic driver $d$. The driver incurs a cost of $1-\varepsilon$ driving alone or serving $r_1$, a cost of $C-\varepsilon$ serving $r_2$, and a cost of $C+1$ serving both $r_1$ and $r_2$. The cost minimizing solution is to only serve $r_1$ with a cost of $1$, while the $\theta$-fair solution is to serve $r_1$ with probability $1-\theta$ and $r_2$ with probability $\theta$, which results in a cost of $C\theta +1-\theta$ for any $C\geq1$. When $\theta\in(0,1]$, $\SPoF\geq\PoF=C\theta +1-\theta=O(C)$.
\end{proof}
\subsection{Efficiency-Stability Trade-off}

It is known that a stable matching always exists in traditional ridesharing models \cite{wang2017stable}. However, because of drivers' extra utilities ($\rho_d,d\in\cD$) and the user-centric factors like preferred time ($\tau^{\star}_i,i\in\cD\cup\cR$), riders who are substitutes to each other otherwise, are actually complementary to each other in the P2P ridesharing problem. In other words, a rider may incur a lower additional cost when matched to the driver together with another rider. Therefore, the stable matching may not always exist in our model.

\begin{proposition}
In the P2P ridesharing problem, the set of stable outcomes may be empty.
\label{proposition:unstable}
\end{proposition}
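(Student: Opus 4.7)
The plan is to prove non-existence by exhibiting an explicit finite instance in which every deterministic matching admits a blocking pair. Since the set of matchings is finite once the instance is fixed, the argument reduces to a case analysis, and the difficulty is to design an instance whose induced preferences contain a cycle that no deterministic matching can break.

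First I would construct a small instance with two drivers $d_1, d_2$ of capacity two and three riders $r_1, r_2, r_3$, placing their origins, destinations, preferred departure times $\tau^\star_i$, time windows $W_i$, and maximum detours $\Delta_i$ so that all candidate schedules needed in the case analysis are feasible under the $\TripLP$ constraints. The geometric feature to build in is the complementarity flagged in the paragraph preceding the proposition: for each rider, the \emph{additional} cost of being picked up together with one particular co-rider is much smaller than with another, because preferred-time offsets and shared route segments are aligned. I would then set the values $\val_i$, outside-option costs $\lambda_r$, per-unit cost weights $c^i_{\text{dev}}$ and $c^i_{\text{trl}}$, and altruism factor $\rho_d$ so that (a) strict individual rationality holds for every rider in every candidate matching he appears in, so that an unmatched rider always strictly prefers to join; and (b) the resulting utilities $U^r_{dS}$ and $U^d_{dS}$ exhibit a rotation of $(d,S)$ pairs $A \to B \to C \to A$ in which switching from the current pair to its successor strictly improves the driver and every rider shared between the two.

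With (a) and (b) in hand, I would enumerate all deterministic matchings and produce a blocking pair for each. Matchings containing only solo or singleton trips are blocked by the most preferred feasible pair of some driver, using (a) to guarantee that the joining riders strictly improve. Matchings using a pair $A$ from the rotation are blocked by the successor pair $B$: the shared rider and the driver both strictly improve by (b), and the rider newly added in $B$ (currently unmatched or in a strictly worse pair) strictly improves by (a) or (b). The main obstacle will be the joint calibration of parameters, because the blocking-pair definition imposes many simultaneous strict inequalities across different candidate schedules and a single bad numerical choice can break the cycle. A useful tactic is to set $\rho_d$ large enough that each driver's utility $U_d = \bar U_d + \rho_d \sum_{r \in S_d} U_r$ is dominated by the sum of rider utilities, so that a cycle engineered at the rider level (via asymmetric preferred-time offsets and cost-weight choices) automatically induces a compatible cycle at the driver level; once the numerics are fixed, the final verification reduces to substituting into Equation~(\ref{eqn:cost}) and the driver-utility formula, which can be relegated to the appendix.
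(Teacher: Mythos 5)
Your strategy is the right one and is essentially the paper's: exhibit a finite instance in which complementarity between co-riders, individual-rationality calibration for drivers, and a cyclic preference structure force every deterministic matching to admit a blocking pair. The paper's appendix realizes this with a fully symmetric $3$-driver, $3$-rider instance: origins $o_a,o_b,o_c$ and destinations $q_a,q_b,q_c$ arranged in a cycle, deviation cost $c_{\text{dev}}=100$ chosen so that a driver serving a single rider violates his IR constraint, time windows and capacity chosen so that the only feasible non-trivial trips are $(d_i,\{r_i,r_j\})$, and each $r_i$ strictly preferring ``his'' driver $d_i$. Every feasible matching then leaves one driver and one rider idle, and that pair together with the already-matched rider forms a blocking triple, closing the cycle.

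The gap is that your proposal never actually produces the witness. For an existence claim of the form ``there exist problem instances where\dots'', the explicit instance together with the verified strict inequalities \emph{is} the proof; a description of the properties the instance ought to have, plus an acknowledgement that ``the main obstacle will be the joint calibration of parameters,'' leaves the central step undone. You have not shown that the many simultaneous requirements --- feasibility of all candidate routes under the $\TripLP$ constraints, failure of IR for singleton trips (or some other device that prunes the matching space), strict rider IR in every candidate pair, and the full rotation of strict blocking inequalities at both the rider and driver level --- are jointly satisfiable; that is precisely the content of the proposition. Your specific suggestion of two drivers and three riders also multiplies the cases to rule out (both drivers can now compete for the same pair of riders, so you must verify non-blocking for configurations such as $(d_2,\{r_1,r_2\})$ as well), whereas the paper's symmetric three-driver construction kills all off-cycle configurations at once via time-window infeasibility. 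To complete the argument you would need to write down concrete locations, windows, costs, values, and $\rho_d$, enumerate the (small) set of feasible matchings, and exhibit a blocking pair for each --- exactly what the paper's appendix tables do.
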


Details of an example where a stable outcome does not exist are deferred to the appendix.




To compute a cost-minimizing matching that satisfies both IR and group-level stability, we only need minor changes to the efficiency-maximizing algorithm in section \ref{sec:effmtch}. For the IR constraint, we add constraints  $U^d_{dS}x_{dS} \geq \val_d - c^d_{\text{trl}}$ and $U^r_{dS}x_{dS} \geq \val_r - \lambda_r$ for all $(d,S)$ pairs in the matching BIP \eqref{MatMILP}-\eqref{MatMILPintCon}. For group-level stability, we define the following constraints for all feasible $(d,S)$ to the matching BIP \eqref{MatMILP}-\eqref{MatMILPintCon} to ensure that the matching is stable.
\begin{align}
    \sum_{\substack{S' : \\U^d_{dS'} > U^d_{dS}}} x_{dS'} + \sum_{r\in S}\sum_{\substack{d',S' :\\ U^r_{d'S'} > U^r_{dS}}} x_{d'S'}
    +\sum_{\substack{r\in S:\\ U^r_{\emptyset} > U^r_{dS}}} y_r+x_{dS} \geq 1
\end{align}

We also quantify how stability constraints impact efficiency. The price of stability ($\PoS$) is defined similarly to $\PoF(\theta)$, measuring the cost increase due to stability constraints.
 \begin{definition}
 $\PoS = \max_{\cM^S}\frac{Cost(\cM^S)}{Cost(\cM^*)}$, where $\cM^*$ are the cost-minimizing (probabilistic) matchings, and $\cM^S$ is a cost-minimizing stable (probabilistic) matching.
 \end{definition}

\begin{proposition}\label{proposition:pos}
There exist problem instances where $\PoS = \Omega(\lambda |\cR|)$, where $\lambda$ is the alternative travel cost of a user. 
\end{proposition}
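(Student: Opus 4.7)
The plan is to exhibit an explicit family of single-driver instances indexed by $n=|\cR|$ in which the unique stable matching forces every rider to the outside option at cost $\lambda$, while the cost-minimizing matching pools all riders into a single trip at bounded cost. This will directly give $\PoS\geq\Omega(\lambda n)=\Omega(\lambda|\cR|)$.

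The instance uses one driver $d$ with a very small extra-utility factor $\rho_d$, and $n$ riders $r_1,\ldots,r_n$ whose origins and destinations are placed along $d$'s route so that the pooled trip $(d,\{r_1,\ldots,r_n\})$ is feasible with an incremental cost $\varepsilon>0$ over $d$'s solo trip cost $c_{d,\emptyset}=O(1)$. The key parameter choice is to force $\varepsilon>\rho_d\sum_i U_{r_i}$, which can always be arranged by taking $\rho_d$ small enough. Each rider has outside option $\lambda_{r_i}=\lambda$ (large), with in-trip costs set small enough that every rider strictly prefers being matched to being unmatched.

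The argument then proceeds in four steps. First, I identify the cost-minimizing matching as $\pi^*=\{(d,\{r_1,\ldots,r_n\})\}$ with total cost $c_{d,\emptyset}+\varepsilon$, since any matching leaving $k$ riders unmatched adds $k\lambda>\varepsilon$. Second, I show $\pi^*$ is unstable because $(d,\emptyset)$ is a blocking pair: the rider-side condition is vacuous for the empty set, and the driver's actual utility satisfies $U^d_{d,\emptyset}-U^d_{d,\{r_1,\ldots,r_n\}}=\varepsilon-\rho_d\sum_i U_{r_i}>0$. Third, I argue that $\pi^S=\{(d,\emptyset)\}$, with all riders taking the outside option, is stable: the same inequality gives $U^d_{d,S}<U^d_{d,\emptyset}$ for every non-empty feasible $S$, so no $(d,S)$ can be a blocking pair, and IR holds trivially (the driver travels alone and each unmatched rider receives the default payoff $\val_r-\lambda$). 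Fourth, since $Cost(\pi^S)=c_{d,\emptyset}+n\lambda$, I conclude $\PoS\geq(c_{d,\emptyset}+n\lambda)/(c_{d,\emptyset}+\varepsilon)$, which is $\Omega(\lambda n)$ once $c_{d,\emptyset}$ and $\varepsilon$ are kept at $O(1)$.

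The main obstacle will be justifying the regime where the driver strictly prefers to travel alone despite serving riders being socially optimal; I handle this by keeping $\rho_d$ arbitrarily small and the pickup overhead $\varepsilon$ bounded below, a natural regime given that $\rho_d$ is meant to capture only modest altruistic or external incentives. A secondary bookkeeping step is verifying that no partial matching $(d,S)$ with $0<|S|<n$ is stable either---this follows from applying the same preference inequality to every such $S$---so that $\pi^S$ is the unique stable matching and the cost ratio is exactly as computed.
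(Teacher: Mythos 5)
Your construction is correct, but it takes a genuinely different route from the paper's. The paper uses a fixed-size instance with two drivers and two riders: the cost-minimizing matching is the cross-matching $(d,r')$, $(d',r)$ at total cost $2\epsilon$, but the mutually preferred pair $(d,r)$ blocks it, and since $(d',r')$ is infeasible the stable outcome strands $d'$ and $r'$ at alternative cost $Q$ each, giving $\PoS = Q/\epsilon$. There the blow-up comes from letting $Q/\epsilon\to\infty$ with $|\cR|=2$ fixed, and the blocking deviation occurs between two matchings that are both individually rational for everyone. Your instance instead uses a single driver and $n$ riders, and the blocking deviation is $(d,\emptyset)$ --- effectively a driver-IR violation engineered by taking $\rho_d$ small relative to the pooling overhead $\varepsilon$ --- which strands all $n$ riders at cost $\lambda$ each and yields a bound that actually scales as $\lambda|\cR|$, matching the stated asymptotics more literally than the paper's two-rider example does. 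What your approach costs you is the reliance on tuning $\rho_d$ (the argument collapses if drivers are assumed to have a guaranteed minimum altruism level, since rider utilities $U_r\approx\lambda$ are large in your regime, forcing $\rho_d<\varepsilon/(n\lambda)$), plus the bookkeeping that $U^d_{dS}<U^d_{d\emptyset}$ for \emph{every} nonempty feasible $S$; you flag this and it does go through by choosing $\rho_d$ small enough over the finitely many such $S$, provided every pickup forces a detour bounded below. Both arguments validly establish that $\PoS$ is unbounded.
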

\begin{proof}
Consider an example where there are a driver $d$ and a rider $r$, and $d$ can serve $r$ at zero cost. However, there's another rider $r'$ who lives far away and another driver $d'$ who lives close to $r$ but farther from $r'$ than $d$. Suppose $(d',r')$ is infeasible ($d'$ and $r'$ cannot be matched together) and alternative cost for every user is $Q>>0$.
Then in the min-cost solution, we have $(d,r')$ and $(d',r)$, incurring cost $\epsilon/2$ for each user ($d,d',r,r'$). 
Then in the min-cost solution, there are  $(d,r')$ and $(d',r)$ which incur a total cost of $2\epsilon$, whereas only stable matching gives utility of $2\epsilon$. However, $(d,r)$ forms a blocking pair. Since $(d',r')$ is infeasible, then $d'$ and $r'$ both incur a cost of $Q$. Thus, the total cost in the stable outcome (where only $(d,s)$ is matched) is $2Q$. This leads to $\PoS = 2Q/2\epsilon = Q/\epsilon$, which can be arbitrary large.
\end{proof}


\subsection{Fairness-Stability Trade-off}
By enforcing $\theta$-fairness, the P2P ridesharing system may become unstable, as more costly riders may be matched to drivers so that drivers are more likely to deviate from the given matching. We formalize the effects of imposing $\theta$-fairness on the stability of the matching in Proposition \ref{prop:fairstable}.

\begin{proposition}\label{prop:fairstable}
$\forall$ $\theta >0$, a $\theta$-fair solution is not ex-post stable.
\end{proposition}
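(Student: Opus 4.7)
The plan is to prove the proposition by exhibiting a problem instance in which some feasible rider $r^{*}$ cannot appear in any stable deterministic matching, while a stable deterministic matching that simply leaves $r^{*}$ unserved does exist. Then $\theta$-fairness with $\theta>0$ forces $r^{*}$ to be served with nonzero probability, so the support of every $\theta$-fair probabilistic matching contains an unstable deterministic matching and the matching fails to be ex-post stable.

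First I would build a small instance with a preference cycle around $r^{*}$: tune $\val_{i}$, $\lambda_{r}$, $\tau^{\star}_{i}$, $\Delta_{i}$, and the driver extra-utility factors $\rho_{d}$ so that (a) $r^{*}\in\cR$, i.e., $r^{*}$ is feasible; (b) some deterministic matching that does not serve $r^{*}$ is stable; and (c) for every deterministic matching $M$ with $r^{*}\in\cup_{d\in\cD} S^{M}_{d}$, there is an alternative pair $(d',S')$ such that $U^{d'}_{d'S'}>U^{d'}_{d',S^{M}_{d'}}$ and $U^{r}_{d'S'}>U^{r}_{d^{M}_{r},S^{M}_{r}}$ for every $r\in S'$, i.e., $(d',S')$ blocks $M$. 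The natural construction is to re-use the cyclic preferences from the instance behind Proposition~\ref{proposition:unstable} and graft in an additional driver or rider whose role is to keep the no-$r^{*}$ matching stable.

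Given the instance, the conclusion is short. By the definition of $\theta$-fairness, $\prob_{r^{*}}(\cM)=\sum_{\ell\in[\eta]} m^{\ell}_{r^{*}} p^{\ell}\geq \theta>0$, so some index $\ell$ satisfies $p^{\ell}>0$ and $m^{\ell}_{r^{*}}=1$. By construction that $M^{\ell}$ admits a blocking pair and is therefore not stable, so $\cM$ violates ex-post stability, which requires every matching in the support to be stable.

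The main obstacle is the instance design itself. The parameters must be tuned so that three requirements hold simultaneously: $r^{*}$ is individually rational to match with at least one driver (so the fairness constraint actually bites); a stable matching that omits $r^{*}$ exists (so the claim is not already implied by Proposition~\ref{proposition:unstable}); and any matching that includes $r^{*}$ is blockable. Achieving all three at once --- in particular ensuring that every driver-rider coalition containing $r^{*}$ is dominated by a coalition that both the driver and the shifted riders strictly prefer --- is the delicate part, and is where the altruism coefficient $\rho_{d}$ and the deviation-cost term in $C^{i}_{tt'}$ need to be set with care.
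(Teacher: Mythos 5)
Your overall strategy is the same as the paper's: build an instance in which $\theta$-fairness forces some rider into the support of the probabilistic matching, and arrange that every deterministic matching serving that rider is blocked while a stable matching omitting him exists. The final deduction (from $\prob_{r^*}(\cM)\geq\theta>0$ to the existence of an unstable matching with $p^\ell>0$ in the support) is correct and is exactly how the paper closes the argument. However, there is a genuine gap: you never actually produce the instance. The entire content of the proposition is the existence of such an instance, and your proposal explicitly defers it ("the main obstacle is the instance design itself... is the delicate part"). A plan that says "tune $\val_i$, $\lambda_r$, $\rho_d$ so that (a), (b), (c) hold" is not a proof until the tuning is exhibited and conditions (a)--(c) are verified; in particular, condition (c) --- that \emph{every} matching containing $r^*$ is blocked --- is a universally quantified claim over matchings that must be checked, not assumed.

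The construction you gesture at is also more complicated than needed. The paper does not graft anything onto the cyclic-preference instance of Proposition~\ref{proposition:unstable}; it uses a single driver $d$ of capacity $2$ with one cheap rider $r_1$ (cost $\epsilon$) and many expensive riders $r_2,\dots,r_m$ with narrow, mutually incompatible windows (each costing $Q\gg\epsilon$ to serve, no two servable together). Then the unique stable matching is $(d,\{r_1\})$, fairness with any $\theta>0$ forces positive probability on some matching in which $d$ serves an expensive rider, and every such matching is blocked by $(d,\{r_1\})$ since both $d$ and $r_1$ strictly prefer it (for finite $\rho_d$). With a single driver you only need to control one agent's preferences to generate the blocking pair, which removes the delicacy you worry about; a preference cycle is unnecessary because the goal here is not non-existence of stable matchings but incompatibility of stability with fairness. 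To repair your proposal, write down this (or an equivalent) instance explicitly and verify (a)--(c) for it.
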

\begin{proof}
Consider the following example with 1 driver, $m$ riders and a fairness level $\theta>0$. The driver $d$ has a very flexible time window and a capacity of 2. 
There are $m-2$ riders with strict and narrow window so that serving each one of them incur a cost of $Q$. 
The driver cannot serve 2 of them simultaneously. More formally we have rider $r_i$ with window $[i,i+m]$ and the $\dist_r = m$ for all $i\in [3,m]$. The driver can serve any of them with his flexible window at a cost $Q$, but cannot serve two of them simultaneously due to their strict time windows. We also have 2 more riders $r_1$ and $r_2$ that can be served together, but if the driver serves $r_1$ only, he incurs a total cost of $\epsilon$, whereas serving $r_2$ only or $(r_1,r_2)$ together will incur a cost of $Q$. First, we study the cost-minimizing matching, where the driver only serves the rider $r_1$, incurring a cost of $\epsilon$. However, the cost-minimizing matching is not $\theta$-fair. The $\theta$-fair solution requires the driver to serve all riders with probability $\geq \theta$, $\theta \in (0,\frac{1}{m-1}]$. This matching can be obtained by serving $m-2$ riders with probability $\theta$, serving $(r_1,r_2)$ pair with probability $\theta$ and serving $r_1$ with probability $1-(m-1)\theta$ at cost $\epsilon$. This results in a total cost of $\theta Q+\epsilon -(m-1)\theta\epsilon$, and therefore incurs $\PoF \geq \frac{\theta Q}{\epsilon}$ (which can be arbitrarily large). 
When the driver's altruistic factor $\rho_d$ is finite, the driver $d$ and $r_1$ may form a blocking pair in which each of them gets a higher utility and incurs a lower cost. Thus, the $\theta$-fair solution is not \textit{ex-post} stable. 
\end{proof}

\section{Experiments}

\begin{figure}[ht!]
    \centering
        \begin{minipage}{0.5\textwidth}
        \centering
        \setlength{\belowcaptionskip}{-4pt}
\begin{tikzpicture}[scale=3.1]

\draw[color = purple!10!white, fill=purple!10!white] (0.1,0) rectangle (2.85,0.75);
\draw[color = blue!20!white, fill=blue!20!white] (0.1,0) rectangle (0.89,0.29); 
\draw[color =green!20!white,fill=green!20!white] (2.1,0.45) rectangle (2.85,0.75); 
\draw[color=red!20!white,fill=red!20!white] (1.2,0) rectangle (2.1,0.22); 
\draw[color=orange!20!white,fill=orange!30!white] (0.88,0.31) rectangle (1.65,0.55); 
\draw[color=purple!30!white,fill=purple!30!white] (0.1,0.35) rectangle (0.85,0.75); 
\draw[color=gray, fill=gray] (1.35,0.35) rectangle (1.69,0.53);  


\node[align=center] at (1.515,0.45) {{\Huge$\star$}};
\node[align=center, color = blue] at (0.40,0.15) {A \\7:12};
\node[align=center, color = blue] at (0.62,0.15) {\footnotesize 40\%};
\node[align=center, color = green!40!black] at (2.4,0.6) {B\\ 7:20};
\node[align=center, color = green!40!black] at (2.67,0.6) {\footnotesize 20\%};
\node[align=center, color = red] at (1.55,0.11) {C\\ 7:40};
\node[align=center, color = red] at (1.77,0.11) {\footnotesize 10\%};
\node[align=center, color = orange] at (1.0,0.43) {D\\ 8:00};
\node[align=center, color = orange] at (1.22,0.43) {\footnotesize 20\%};
\node[align=center, color = purple] at (0.5,0.55) {E\\ 8:00};
\node[align=center, color = purple] at (0.72,0.55) {\footnotesize 10\%};

\end{tikzpicture}
\caption{A graphical description of the experimental setting.
}
\label{fig:expSetting}
    \end{minipage}
\end{figure}
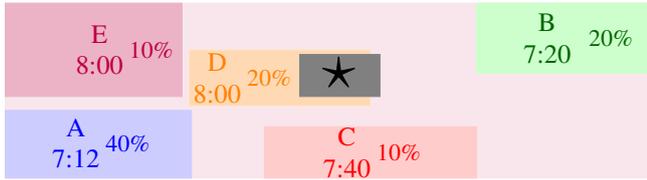

\begin{figure*}[t]
\centering
    
\subfigure[Driver/Rider Ratio]{\label{subfig:ratio}
    \includegraphics[width=0.26\textwidth,height=85pt]{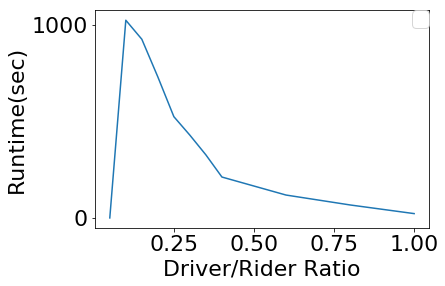}
    }
\subfigure[IR and Stability]{\label{subfig:scalIR}
    \includegraphics[width=0.28\textwidth,height = 85pt]{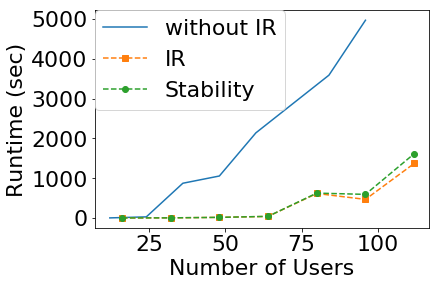}
    }
\subfigure[Fairness Runtime]{\label{subfig:scalFair}
    \includegraphics[width=0.24\textwidth,height = 85pt]{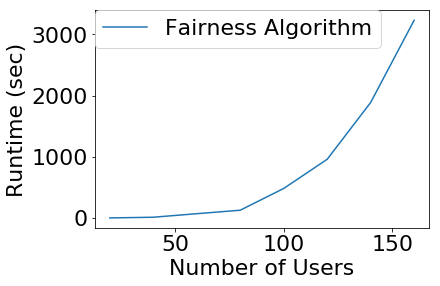}
    }
    \caption{Results on synthetic data of a morning rush hour.
    }
	\label{fig:scal}
\end{figure*}
\begin{figure*}[ht!]
\centering
    
\subfigure[Cost of Heuristics]{\label{subfig:Hbar}
    \includegraphics[width=0.34\textwidth,height=75pt]{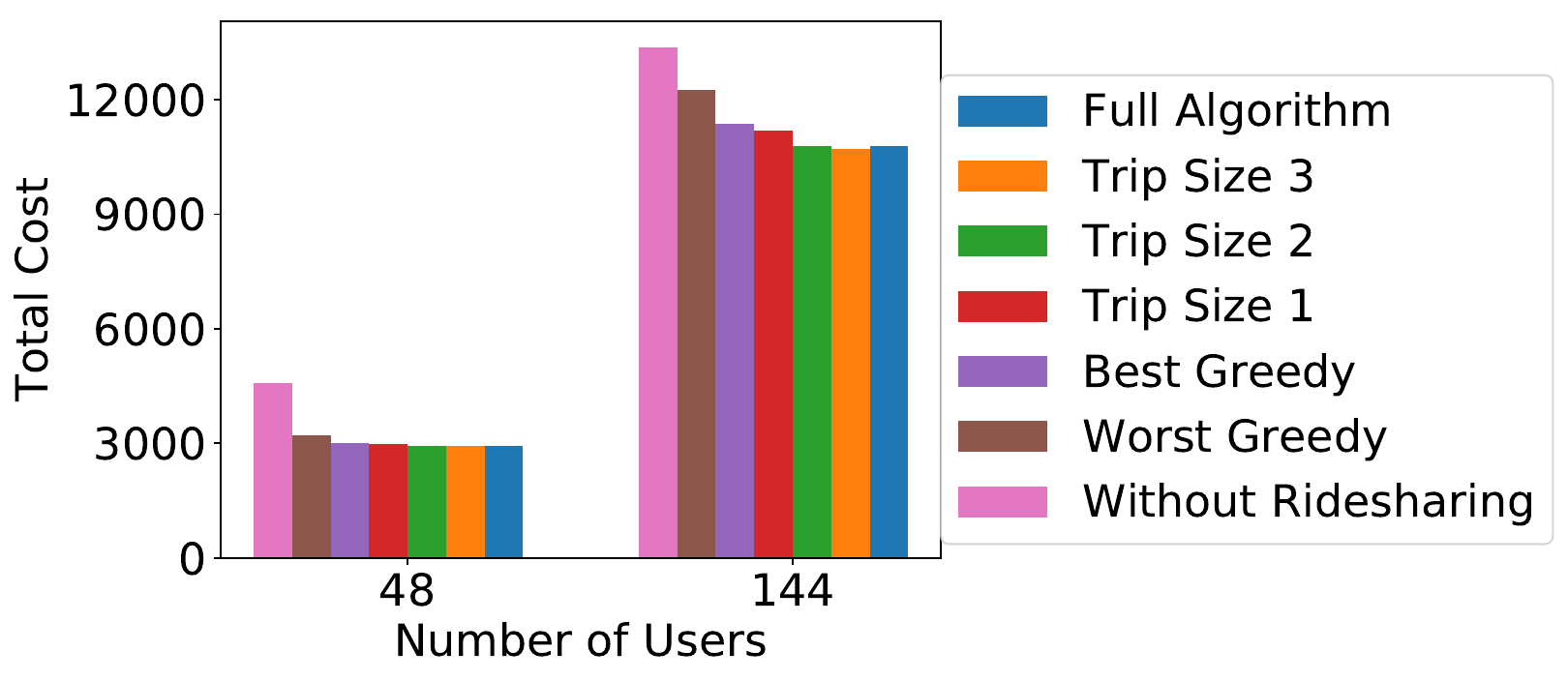}
    }
\subfigure[Runtime Heuristics]{\label{subfig:Htline}
    \includegraphics[width=0.25\textwidth,height = 75pt]{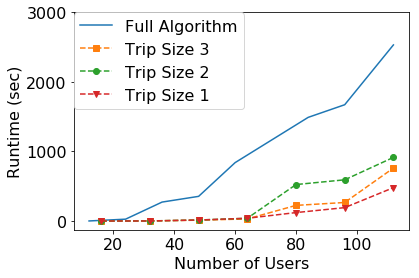}
    }
\subfigure[Large-size Scalability]{\label{subfig:scalsparse}
    \includegraphics[width=0.25\textwidth,height=75pt]{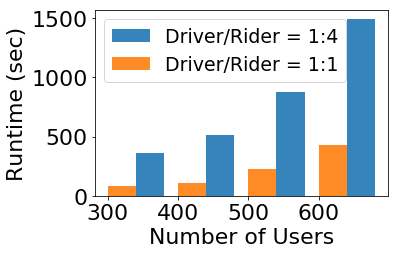}
    }
    \caption{Results based on synthetic data of a morning rush hour (a)-(b) and a large-scale setting (c).
    }
	\label{fig:scal_large}
\end{figure*}

\begin{figure*}[ht!]
\centering

\subfigure[Reduced Travel Time]{\label{subfig:RTT}
    \includegraphics[height=85pt]{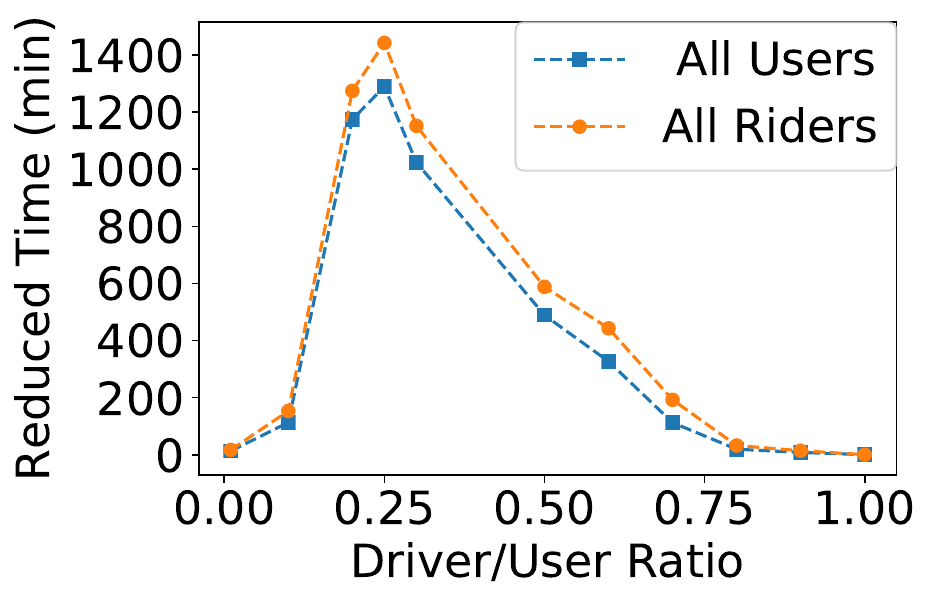}
    }
\subfigure[PoS and PoF]{\label{subfig:pospof}
  \includegraphics[height=85pt]{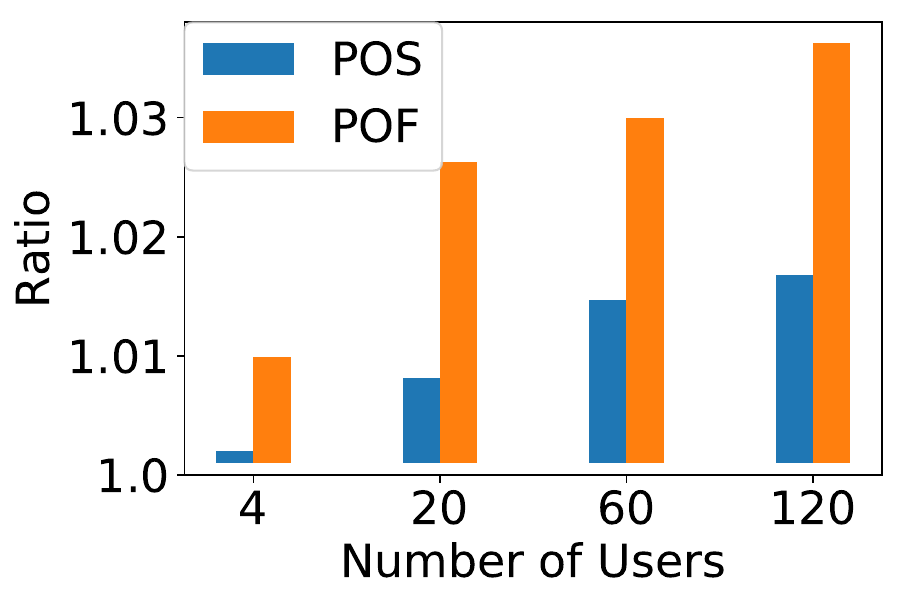}
    }
\subfigure[Effect of Extra Utility]{\label{subfig:altruism}
    \includegraphics[height=95pt]{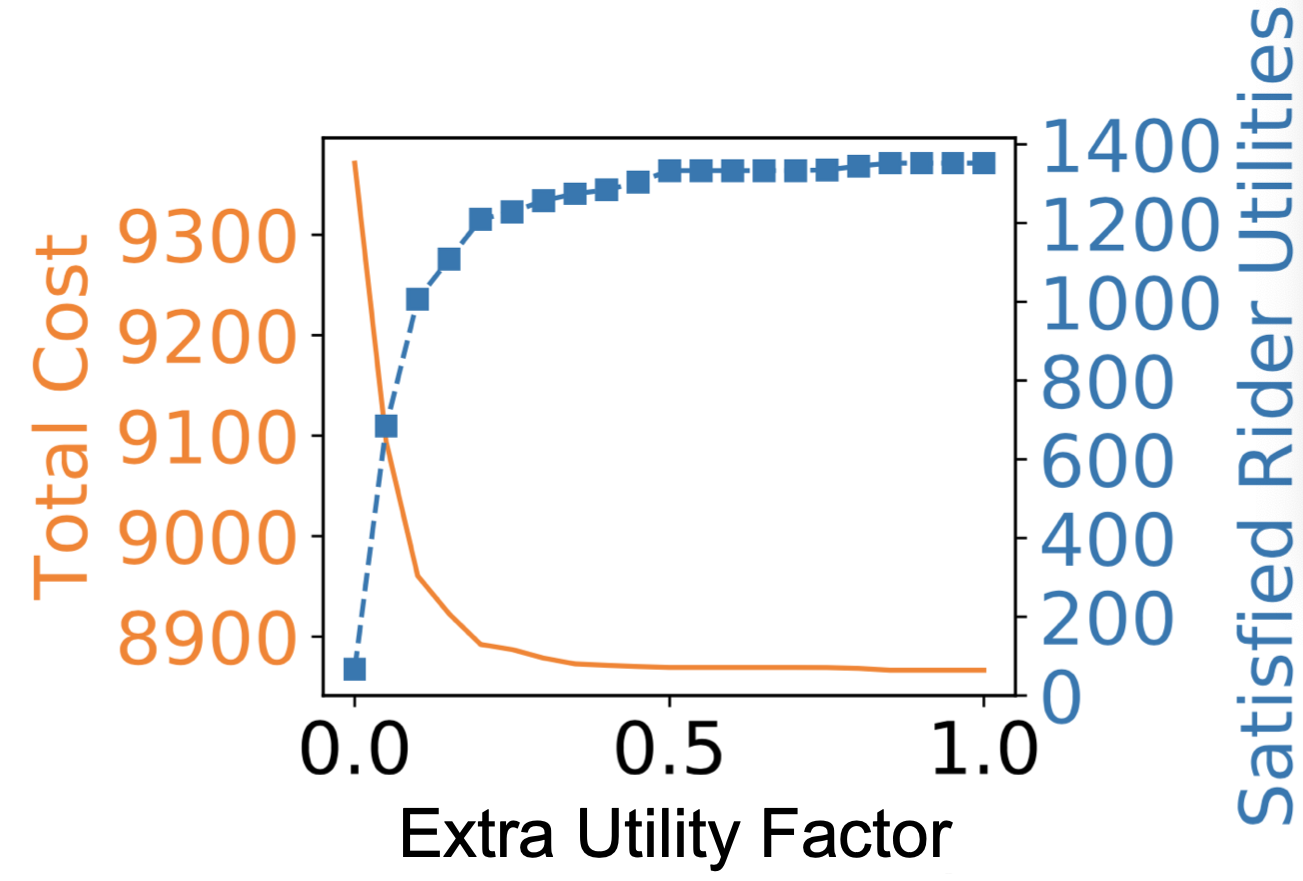}
	}
	\caption{Results on real-world data based setting}
	\label{fig:algFig}
\end{figure*}

We evaluate our algorithms with three sets of experiments. The first one is a base setting, the second one is stress-testing the scalability of the algorithms, and the third set is based on real-world data.
All BIPs and LPs are solved by Gurobi Optimizer on a 3.1 GHz Xeon E5 desktop with 16 GB RAM. All results are averaged over 20 instances. 

First, we experimented with problem instances that simulate a typical neighborhood in a morning rush hour. We used a $50\times50$ graph $G$. We define $\dist$ as the Euclidean distance. Users are randomly generated and drawn from 5 different user types. The base setting and details about different types of users are described in Figure \ref{fig:expSetting}. The gray $\star$ region correspond to a neighborhood, i.e. the origin of all users. The other regions are the destinations of different types of users, with their percentages and latest departure times marked on the graph. For example, 40\% of all users are of type A. A type-A user $i$'s destination is drawn uniformly from the blue region in the bottom-left corner, $\tau^e_i$ drawn from [7:00,7:12], $\tau^\star_i$ drawn from $[\tau^e_i,$7:12], $\tau^l_i =$ 7:12+$\dist_i$. Where 7:00 is the earliest possible departure time of all users, and $\max_{u,v\in V}\dist(u,v)$ is 60 minutes.
Travel costs ($c^i_{\text{trl}}$) are set to 3 per minute, and deviation costs ($c^i_{\text{dev}}$) are 1 per minute. All drivers have a capacity of $k_d= 4$, and altruistic factor $\rho_d = 1.2$. The value of the user $i$ is set to $\val_i = c^i_{trl}\cdot \dist_i \cdot U[1,2.5]$.
Lastly, the alternative cost of the trip, i.e. $\lambda_i$, is set to $\val_i$. 
Figure \ref{subfig:ratio} shows how runtimes change as the driver-to-rider ratio varies (with 50 users). 
Figure \ref{subfig:scalIR} shows that adding IR and stability constraints reduces the runtimes significantly. 
Indeed, fairness and stability reduce the number of feasible matchings; therefore, the algorithm can scale up to a significantly larger number of users (110 users within 30 minutes). 
This suggests that IR and stability enhance both the practical benefits of the solution and the scalability of the algorithm.
Figure \ref{subfig:scalFair} shows the runtime for our fairness algorithm. Even though the fairness algorithm uses our baseline algorithm, the runtime did not increase significantly. 

Next, we focus on using a heuristic that restricts the maximum Trip Size to be 1, 2, or 3. Figure \ref{subfig:Hbar} compares solution quality for different methods. The greedy algorithm, as a benchmark, involves randomly ordering drivers and greedily matching them to riders. We ran $|\cD|^2$ orderings and took the best and worst solutions to compare with the full algorithm. 
These results underscore the benefits of our algorithm which can result in more efficient matching and lower overall costs to the system. While Figure~\ref{subfig:Hbar} shows that these heuristics are slightly worse than the full algorithm, Figure \ref{subfig:Htline} shows that they accelerate runtime, as compared to the full algorithm.
For instance, the Trip Size 2 has increased total costs by only 2\%. 
This highlights opportunities to derive near-optimal solutions in much faster computational times in practice.

In addition to our small scale baseline experimental setting (which mimics a typical morning rush hour situation), we also test our algorithm in a large-sized setting(which captures suburbs and rural areas). In this experiment, the origins $o_i$ and destinations $d_i$ of user $i$ are randomly drawn from the uniform distribution within the graph $G$. User earliest departure times $t^e_i$, latest arrival times $t^l_i$, preferred times $t^\star_i$ are randomly drawn from uniform distribution within their feasible time horizons.
The other parameters remain the same as described in the base setting. Figure \ref{subfig:scalsparse} shows scalability results in this setting. The observations are twofold. First, the runtimes scale linearly; second, the algorithm is much more scalable in sparse large-sized settings. 

Moreover, we also experimented with real-world data collected from a survey to residents living in subsidized, low-rent apartments in a suburban area in the US. Their responses not only indicate their interest in participating P2P ridesharing but also describe their regular destinations, preferred travel time, etc. We extracted 8 locations
that are frequently visited on a typical weekend and extracted $\lambda_i$ as travel time using public transportation time and driving time respectively from Google Maps. The community has 78 apartments, and we expect 125 residents to participate in the P2P ridesharing program. Additional parameter settings can be found in the appendix.
Fig. \ref{subfig:RTT} shows that our algorithm can efficiently match drivers and riders, reducing total travel time by more than 20\% (1400+ minutes).
Figure \ref{subfig:pospof} shows the price of stability ($\PoS$) and the price of fairness ($\PoF(\theta=0.2)$) with different numbers of users and a driver to rider ratio of 1:4. Even though theoretically $\PoS$ and $\PoF(\theta)$ can be arbitrarily large, in practice incorporating fairness and stability considerations results in added costs of only less than 4\% and 2\%, respectively. 
Figure \ref{subfig:altruism} shows the effects of the extra utility factor $(\rho)$ on total cost and satisfied rider utilities. 
Even a small extra utility factor of 0.5 can achieve a reduction in cost at a similar level of higher values. 
All these results show that our models and algorithms can derive high-quality solutions in large-scale instances, and elicit the trade-offs between critical performance measures to support decision-making in the P2P ridesharing systems.
\hspace{-6pt}
\section{Discussion and Conclusion}

Some of the assumptions in our model can be further relaxed. For example, we may add flexibilities in user roles with minor amendments. With role flexibility, a user may indicate his interest in participating in the P2P ridesharing system and is willing to drive others and/or get a ride from others. Since a user may be either a driver or a rider, we duplicate the user to get a driver and a ``mirroring'' rider. 
In the matching BIP, we can add a constraint $y_{r_d}=\sum_{S\in\cS}x_{dS}$ for a driver $d$ and his mirroring rider $r_d$ to ensure that the flexible user is either driving or acting as a rider. It should be noted that the rider set expanded to $\cR^{f}=\cR\cup\{r_d\}_{d\in\cD}$ and $\cS\subseteq 2^{\cR^f}$.

We can also easily extend our model so that a driver's extra utility may not be proportional to the utility of the matched riders --- only slight modifications to our algorithms are needed to handle the case (see appendix for more details).


To conclude, we proposed a novel user-centric approach to the matching problem in non-commercial P2P ridesharing, taking into account efficiency, fairness, and stability. 
The models and algorithms developed in this paper have now been implemented in a real-world P2P ridesharing system.




    

\newpage
\bibliography{uai2021-template}

\newpage
 \appendix
\section*{Appendix}
\section{Notation Table}
\begin{minipage}{0.9\linewidth}
\centering
\begin{tabular}{|c|c|}
\hline
$[0,T]$ & Time horizon\\
\hline
$\cR$/$\cD$ & Set of riders/drivers\\
\hline
$r$/$d$ & One rider/driver\\
\hline
$o_{r/d}$, $q_{r/d}$& Origin, Destination of a rider/driver\\
\hline
$V$& $\{o_i \cup q_i:i\in \cD\cup\cR\}$\\
\hline
$W_{r/d}$ & Time window of a rider/driver\\
\hline
$\tau^e_{r/d}$ & Earliest departure time for a rider/driver\\
\hline
$\tau^l_{r/d}$ & Latest arrival time for a rider/driver\\
\hline
$\tau^\star_{r/d}$ & Preferred departure time for a rider/driver\\
\hline
$\Delta_{r/d}$ & Maximum detour time for a rider/driver\\
\hline
$\val_{r/d}$ & Value gain by a complete the trip\\
\hline
$k_d$ & Capacity of the driver $d$\\
\hline
$\rho_d$ & The altruistic factor of the driver $d$\\
\hline
$\cS = 2^{\cR}$ & Set of all possible subsets of riders\\
\hline
$c_{dS}$ & Cost of driver $d$ serving $S\in\cS$ \\ 
\hline
$\lambda_r$ & Cost of alternative transport mode for rider $r$\\
\hline
$\dist(u,v)$ & Distance from node $u$ to node $v$\\
\hline
$\dist_r$ & $\dist(o_r,q_r)$; Distance from $o_r$ to $q_r$\\
\hline
$C^d_{tt'}$ & Total cost for the driver $d$ when departures\\ & at time $t$ and arrives destination at time $t'$\\
\hline
$C^r_{tt'}$ & Total cost for the rider $r$ with picked-up\\ & time $t$ and dropped-off time $t'$\\
\hline
$c^{r/d}_{dev}$, $c^{r/d}_{trl}$ & Deviation/travel cost per unit time\\
\hline
$\Pi$/$\cM$ & The set of all schedules/matchings\\
\hline
$C^{i,\pi}$ & Total cost for user $i$ given schedule $\pi$\\
\hline
$\pi$ & A system schedule (ordered (stop,time) pairs)\\
\hline
$M$ & Matching; assignment of $\cD$ to $\cS$\\
\hline
$\route$ & A driver route (ordered stops for a driver)\\
\hline
$S^{\pi/M}_d$ & Set of riders that driver $d$ is matched in $\pi$/$M$\\
\hline
$d^{\pi/M}_r$ & The driver that rider $r$ is matched in $\pi$/$M$\\
\hline
$S^{\pi/M}_r$ & The set rider $r$ is matched in $\pi$/$M$\\
\hline
$N(\route)$ & Possible next stop of the route\\
\hline
$\dschedule_h$ & Heuristic driver Schedule\\
\hline
$c_h$ & Cost of a heuristic driver schedule\\
\hline
$\route_{end}$ & The last node in the route\\
\hline
$\route + u$ & Add node $u$ at the end of $\route$\\
\hline
$r(v)$ & The rider that the node $v$ corresponds to \\
\hline
$\nxt^{\route}_{v}$ & The node that's next to $v$ in $\route$.\\
\hline
$\CurPas_{\route}(v)$  & The set of riders in the car at node $v \in \route$\\
\hline
\end{tabular}  
\end{minipage}

\section{Efficient Matching}

\subsection{NP-hardness}

The P2P ride-sharing problem is NP hard, even when we don't consider the deviation cost, the capacity limits, and the IR constraints, etc. The P2P ridesharing problem can be reduced from the TSP problem. Suppose that each node $v$ in the TSP problem is a rider $r$'s origin and his destination as well, while his time window is $[0,\infty]$ and $\val_r = \infty$. Furthermore, suppose there is only one driver, and the  driver's origin and destination are also a same vertex $v$. Moreover, suppose his altruistic factor is $\rho = \infty$ and his time window is $[0,\infty]$. In this setting, the driver's utility with altruism is maximized when he tries to serve all riders. Thus, the problem setup is equivalent to traverse all the nodes in the graph, subject to minimizing distance travelled. Note that this reduced setup is exactly a TSP problem.


\subsection{Details about the RTV Framework}
To compute all feasible $(d,S)$ pairs for all drivers, we need to construct the rider-trip-vehicle-graph (RTV-graph)~\cite{alonso2017demand}. The construction of the RTV graph requires the construction of rider-vehicle-graph(RV-graph), decomposition, and finally the construction of RTV-graph.

\subsubsection{RV-graph}
In the first step, we compute the RV-graph as follows: For all pairs of $(d,r)$, we check whether the driver $d$ can pick up the rider $r$ and drop off him within both of their time windows while satisfying their IR constraints. If feasible, we add an edge between $d$ and $r$. Furthermore, for each rider pair $(r_1,r_2)$, we check whether any virtual driver can pick up both riders and drop them off within their time windows. If they are feasible we add an edge between $r_1$ and $r_2$.

\subsubsection{Decomposition}
In order to solve large problem instances quickly, we want to solve mutually independent smaller problems in parallel if possible. To construct such mutually independent sub-instances, we start by treating each driver as its own group and then go through all the riders he can possibly serve. If no two drivers can serve one rider, then the two drivers are not in the same group. we merge the two groups $d$ and $d'$ are in. We continue these operations until no further groups can be combined. Then, we run the algorithm for each subgroup of drivers and their compatible riders in parallel.

\subsubsection{RTV-graph}
Next, we construct the RTV-graph. The RTV graph has 3 types of nodes --- the driver nodes $\forall d\in \cD$, the trip nodes for all (feasible) subset of riders $S\subseteq \cR$, and the rider nodes $\forall r\in \cR$. At the beginning, we add an empty trip. Next, we add edges between $d$ and empty trip for all $d\in \cD$.
Then we add trip $S$ of size 1. We have edge between $r$ and $S$ if $r\in S$. And we have an edge between $S$ and $d$ if $(d, S)$ is feasible. 
Let $\cS_1$ be set of the feasible trip of size 1. Similarly, we will maintain a feasible trip of size $i$ as $\cS_i$ for $i\geq 2$. 
For trip size $i \in \{2, ... ,|\cR|\}$, for each trip $S' \in \cS_{i-1}$, add a rider $r$ from feasible rider $\cS_1$. Let $S$ = $S' \cup \{r\}$. If $|S| < i$ and any subset of $S$ with size $i-1$ is not in $\cS_{i-1}$ then continue to next rider in $\cS_1$. Otherwise compute $\TripCost(d,S)$

\subsection{Proof of Observation \ref{obs1:equivalence}: Cost Minimization}

Let $S^\pi_{\cD}:=\cup_{i\in\cD} S^\pi_i$ be the set of riders that are matched to any driver in $\cD$ in the schedule $\pi$. First, we consider the case where drivers are not altruistic.

\begin{align*}
&&\max_{\pi \in \Pi} \left( \sum_{r\in \cR} U_r + \sum_{d\in \cD} U_d \right)
\quad = \quad \sum_{r\in \cR} \val_r + \sum_{d\in \cD}\val_d \\
&  &- \sum_{r\in S^\pi_{\cD}} C^r_{\overline{t^\pi_{r}}\underline{t^\pi_{r}}} 
- \sum_{r\in\cR:r\not\in  S^\pi_{\cD} }\lambda_r
-\sum_{d\in \cD} C^d_{\overline{t^\pi_{d}}\underline{t^\pi_{d}}} 
\end{align*}

Note that $\sum_{r\in \cR} \val_r$ and $\sum_{d\in \cD}\val_d$ are independent of $\pi$. Therefore the above optimization is equivalent to the following cost minimization problem.
\begin{align*}
\min_{\pi \in \Pi} \sum_{d\in \cD} \sum_{r\in S^\pi_d} C^r_{\overline{t^\pi_{r}}\underline{t^\pi_{r}}}  +\sum_{d\in \cD} C^d_{\overline{t^\pi_{d}}\underline{t^\pi_{d}}} + \sum_{j\in \cR: j\not\in S^\pi_{\cD}} \lambda_j
\end{align*}

Now consider the altruistic case, where a driver's utility contains a term which is linearly proportional to the utility sum of the rider subset he serves.  

\begin{align*}
&\max_{\pi \in \Pi} \left( \sum_{r\in \cR} U_r + \sum_{d\in \cD} \tilde{U}_d \right)
\quad = \quad \sum_{r\in \cR} \val_r + \sum_{d\in \cD}\val_d \\&- \sum_{r\in\cR:r\not\in  S^\pi_{\cD} }\lambda_r + \sum_{d\in \cD}  \sum_{r\in S^\pi_d} \rho_d (C^r_{\overline{t^\pi_{r}}\underline{t^\pi_{r}}}-\val_r)
\\&- \sum_{r\in S^\pi_{\cD}} C^r_{\overline{t^\pi_{r}}\underline{t^\pi_{r}}} 
-\sum_{d\in \cD} C^d_{\overline{t^\pi_{d}}\underline{t^\pi_{d}}} 
\end{align*}

Again, $\sum_{r\in \cR} \val_r$ and $\sum_{d\in \cD}\val_d$ are independent of $\pi$. Thus, the social welfare maximization problem with driver altruism can be described as the following cost minimization problem.
\begin{align*}
&\min_{\pi \in \Pi}& \sum_{d\in \cD}  \sum_{r\in S^\pi_d} (1+\rho_d) C^r_{\overline{t^\pi_{r}}\underline{t^\pi_{r}}}  + \sum_{d\in \cD} C^d_{\overline{t^\pi_{d}}\underline{t^\pi_{d}}} \\ &&+ \sum_{j\in \cR: j\not\in S^\pi_{\cD}} \lambda_j - \sum_{d\in\cD}\sum_{r\in S^\pi_d}\rho_d\val_r \
\end{align*}

Alternately we can have external incentive instead of altruistic factors. If external incentive is proportional to utility of satisfied riders, then we get the same equation as above. Thus we only look at external incentive factor that is proportional to number of satisfied riders.

\begin{align*}
&\max_{\pi \in \Pi} \left( \sum_{r\in \cR} U_r + \sum_{d\in \cD} \tilde{U}_d \right)
\quad = \quad \sum_{r\in \cR} \val_r + \sum_{d\in \cD}\val_d \\&- \sum_{r\in\cR:r\not\in  S^\pi_{\cD} }\lambda_r + \sum_{d\in \cD}  \sum_{r\in S^\pi_d} 1 
- \sum_{r\in S^\pi_{\cD}} C^r_{\overline{t^\pi_{r}}\underline{t^\pi_{r}}} 
-\sum_{d\in \cD} C^d_{\overline{t^\pi_{d}}\underline{t^\pi_{d}}} \\
& = \sum_{v\in \cR} \val_r + \sum_{d\in \cD}\val_d -\sum_{r\in \cR:r\not\in S^\pi_{\cD}} (\lambda_r+1) + |\cR| 
- \sum_{r\in S^\pi_{\cD}} C^r_{\overline{t^\pi_{r}}\underline{t^\pi_{r}}} 
\end{align*}

Note that $|\cR|$, $\sum_{r\in\cR}\val_r$ and $\sum_{d\in\cD} \val_d$ are independent of $\pi$. Thus, the maximization problem is equivalent to following cost minimization problem with modified unsatisfied cost coefficient.

\begin{align*}
&\min_{\pi \in \Pi}& \sum_{d\in \cD}  \sum_{r\in S^\pi_d}  C^r_{\overline{t^\pi_{r}}\underline{t^\pi_{r}}}  + \sum_{d\in \cD} C^d_{\overline{t^\pi_{d}}\underline{t^\pi_{d}}} + \sum_{j\in \cR: j\not\in S^\pi_{\cD}} (\lambda_j+1)
\end{align*}

\subsection{Pruning}

Pruning is essential to our algorithm. The first pruning condition (Line \ref{treesearch:pruning1}) is when the lower bound of the cost of a partial route $\underline{c}$ is no better than the current best solution $(c^*, z^*)$, which is initialized with $(c^h,z^h)$. There are various ways to get a lower bound and we choose to ignore the deviation cost and only compute the traveling time cost, which can be computed in an incremental way (Line \ref{treesearch_lowerbound}). Including the additional travel cost for visiting some of the remaining stops can lead to a tighter bound.
In addition, we prune the branches using the constraints of vehicle capacity (implicitly in Line \ref{treesearch:nextstop}). Further, by recording the earliest possible time for visiting the last stop in the partial route $\underline{\tau}$, we can prune the branch using the latest arrival time constraint of all the current users on the vehicle (Line \ref{treesearch:pruning2}). Lastly, we prune a branch if the maximum detour time constraint for any rider on the vehicle will be violated (Line \ref{treesearch:pruning3}). The function $\delta(i,\route)$ computes the remaining detour budget for rider $i$, i.e., the maximum additional detour time rider $i\in S$ can afford given the current partial route $\route$.
With these pruning techniques, $\TripCost$ is able to compute $c_{dS}$ and the corresponding $z_{dS}$ efficiently. Specifically, $\TripCost$ is guaranteed to find the optimal solution --- although the worst-case runtime is exponential. In practice, however, it runs efficiently for middle-sized problems due to the effective pruning, as we show in our experiments. 

\begin{algorithm}[t]
  \caption{$\mathsf{TreeSearch}
(d,S,c^*,\dschedule^*,\route,\underline{\tau},\underline{c})$}\label{alg:tripcost_treesearch}
  \begin{algorithmic}[1]
    \If {$|\route| = 2|S|+2$} \Comment{Reach leaf node} \label{treesearch:reach_leaf}
      \If {$\underline{c}<c^*$} 
      \State $(c,\dschedule) \gets \mathsf{Solve}\TripLP(\route)$\label{treesearch:call_TripLP}
      \EndIf
      \If {$c<c^*$}
          \State $(c^*,\dschedule^*)\gets (c,\dschedule)$
      \EndIf
      \State \Return $(c^*,\dschedule^*)$
    \EndIf
    \State $N(\route)\gets$ set of feasible next stops \label{treesearch:nextstop}
    \For {$v'\in N(\route)$}\Comment{Expand the tree}
      \State $\route' \gets \mathsf{append}(\route,v')$\label{treesearch:append}
      \State $\underline{\tau}' \gets \underline{\tau}+\dist(\mathsf{laststop}(\route),v')$\label{alg1:time_nxtstp}
      \State $\underline{c}' \gets \underline{c}+\sum_{i\in \route} c^i_{\text{trl}}\cdot  \dist(\mathsf{laststop}(\route),v')$ \label{treesearch_lowerbound}
      \If {$\underline{c}'<c^*$} \label{treesearch:pruning1} \Comment{Pruning conditions}
      \If {$\underline{\tau}'\leq \min\limits_{i: o_i\in \route \& q_i\notin \route} \tau^l_i$} \label{treesearch:pruning2}
      \If {$\min\limits_{i:o_i\in\route \& q_i\notin \route}\delta(i,\route)\geq \dist(\mathsf{laststop}(\route),v')$}\label{treesearch:pruning3}
          \State $(c,\dschedule)\gets \mathsf{TreeSearch}(d,S,c^*,\dschedule^*,\route',\underline{\tau}',\underline{c}')$ \label{alg1:rec_call_tripSearch}
          \If {$c<c^*$}
              \State $(c^*,\dschedule^*)\gets (c,\dschedule)$
          \EndIf
      \EndIf
      \EndIf
      \EndIf
    \EndFor
  \State \Return $(c^*, \dschedule^*)$
  \end{algorithmic}
\end{algorithm}

\section{Fairness}

\subsection{Proof of Proposition 1}
\begin{proof}
The proof of (a) and (b) follows from global sensitivity analysis from \cite{bertsimas1997introduction}. 

(a) and (b) are direct conclusions from the paragraph below equation (5.2) on page 214. Equation (5.2) states that $F(b)=\max\limits_{i=1,\dots,N}(p^i)^Tb$, $b\in S$, where $F$ is the objective value of the linear optimization problem as a function of right hand side (RHS) vector $b$, $p^1,\dots,p^N$ are the extreme points of the dual feasible set. ``... In particular, $F$ is equal to the maximum of a finite collection of functions. It is therefore a piecewise linear convex function,$\dots$...''. Proof of (a) can also be found in the first paragraph on page 213. The idea is that the optimal basis only changes finite times before the linear program becomes infeasible.

For the proof of (b), also see Theorem 5.1 and its proof on page 213. Let $\theta_a,\theta_b\in[0,1]$ and $\theta_a<\theta_b$. For $i=a,b$, let $x^i$ be an optimal solution to the linear optimization problem. Take a scalar $\lambda\in [0,1]$, $y=\lambda x_a + (1-\lambda) x_b$ is a feasible solution to the linear optimization problem with RHS set to $\bar{\theta} = \lambda \theta_a + (1-\lambda) \theta_b$. Note that the objective function is linear, thus $F(\bar{\theta})\leq \lambda F(x_a) + (1-\lambda) F(x_b)$.

For (c), we develop a bisection search algorithm that finds the exact Pareto Frontier. The Pareto Frontier captures the minimal costs associated with each fairness level $\theta\in [0,1]$; thus, we are looking at a two-dimensional plane where the Pareto Frontier is the envelope of the set of all possible cost-$\theta$ points on the plane.
This method applies to a broader class of problems---as long as the Pareto Frontier is known to be a piece-wise function of finite sub-functions and has non-decreasing second-order gradients. The bisection search algorithm first compute the envelope hyper-planes of the Pareto frontier at boundary points $\theta^0_0 = 0$, and $\theta^0_1 = 1$. 
Without loss of generality, we assume that these envelope planes (lines) intersect, we record the $\theta$-value ($\theta^1_0$) and cost of the intersection of the two lines. We compute the optimal cost at $\theta^1_0$, and compare the optimal cost with the cost of the intersection. From (b) we know that the first-order difference of the Pareto frontier is non-decreasing, thus the true optimal cost at $\theta^1_0$ could either equal or greater than the cost value of the intersection point. If equal, then the upper border of the two intersecting lines is the Pareto frontier on the interval; otherwise, we bisect the interval, and repeat the procedure described above at both of the two half-length closed intervals. Note that there are only finite number of line intersections and base changes in the linear optimization program, this algorithm will stop within finite number of bisection search. (Note that if the two lines coincide at any step, then the line segment between the points where the hyper-planes are taken is the Pareto frontier on the interval.)
\end{proof}
\subsection{Fairness Extension: Different Fairness Levels}

Our fairness model/algorithm can be directly generalized to cases with heterogeneous riders. Suppose in the P2P platform, the probability of being offered a trip is different for different riders. For example, if a rider has previously been a driver who provides trip to other riders, he may be given higher probability of being matched to others who never drives anyone else. In these cases, we will have different threshold $\theta_r$ for different riders. The following algorithm computes a feasible solution that satisfies all the threshold constraints.
\begin{align*}
    \max_{p,\delta z} \quad & z\\
    \text{s.t.}\quad 
    & \delta_i \leq \sum_i m^\ell_i p^\ell &\forall i\in\cR\\
    & z \leq \delta_i - \theta_i & \forall i\in\cR\\
    & \sum_{\ell} p^\ell = 1\\
    & p^\ell \geq 0
\end{align*}

\section{IR and Stability}

\subsection{Riders are automatically IR}
\begin{claim}
Riders are individually rational in the cost-minimization solution.
\end{claim}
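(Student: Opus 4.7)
The plan is to argue by contradiction: I would assume that the cost-minimizing schedule $\pi^*$ matches some rider $r$ to a driver $d$ in such a way that $r$'s IR constraint is violated, i.e. $C^{r,\pi^*} > \lambda_r$, and then construct a strictly cheaper feasible schedule, contradicting the optimality of $\pi^*$. Riders who are not matched in $\pi^*$ satisfy IR trivially with equality, so it suffices to focus on matched riders.

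First, I would take the optimal driver-schedule $z^*$ for the pair $(d, S)$ with $r \in S = S^{\pi^*}_d$ that realizes $c_{dS}$, and build a modified driver-schedule $z'$ for $(d, S \setminus \{r\})$ by deleting $o_r$ and $q_r$ from the stop sequence while keeping the relative ordering of the remaining stops and retaining the same departure times (using waiting at pickups as allowed by Observation~\ref{OBS1:EQUIVALENCE}). Because $\dist(\cdot,\cdot)$ is a shortest-path metric, skipping two stops can only weakly shorten every inter-stop leg, so every remaining user arrives no later than in $z^*$; together with the unchanged departure times this preserves all earliest/latest window constraints (Eqns.~\ref{tripLP_e}--\ref{tripLP_l}) and all detour constraints (Eqn.~\ref{tripLP_detour}) for $d$ and for every $r' \in S \setminus \{r\}$. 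Hence $z'$ is feasible and its cost is at most $c_{dS} - C^{r,\pi^*}$, since the deviation and in-vehicle-travel terms for $r$ are removed while no other user's cost increases. This gives $c_{d, S\setminus\{r\}} \le c_{dS} - C^{r,\pi^*}$.

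Next, I would exhibit the alternative system-schedule $\pi'$ that agrees with $\pi^*$ on every driver other than $d$, assigns $z'$ to $d$, and leaves $r$ unmatched. Using the cost decomposition in Eqn.~\ref{costObjective}, the total cost of $\pi'$ satisfies
\begin{align*}
\mathrm{Cost}(\pi') &\le \mathrm{Cost}(\pi^*) - C^{r,\pi^*} + \lambda_r \;<\; \mathrm{Cost}(\pi^*),
\end{align*}
where the strict inequality uses the assumed IR violation $C^{r,\pi^*} > \lambda_r$. This contradicts the cost-minimality of $\pi^*$, so every matched rider must have $C^{r,\pi^*} \le \lambda_r$, which is exactly the IR condition $U_r = \val_r - C^{r,\pi^*} \ge \val_r - \lambda_r$.

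The main obstacle I anticipate is the feasibility step: one has to be careful that deleting $r$'s two stops does not create a hidden violation for the remaining users. This is where the triangle-inequality property of $\dist$ and the waiting-at-pickup characterization in Observation~\ref{OBS1:EQUIVALENCE} do the real work, and I would phrase the construction explicitly in terms of keeping the same $\vardschedule_v$ values for all $v \notin \{o_r, q_r\}$ so that constraints~\eqref{tripLP_timeSequence}--\eqref{tripLP_detour} can be checked term by term. Everything else is bookkeeping on the objective.
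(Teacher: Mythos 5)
Your proposal is correct and follows the same overall strategy as the paper: assume a matched rider $r$ violates IR, unmatch $r$, and exhibit a strictly cheaper feasible schedule to contradict optimality. The one place you diverge is in the construction of the modified driver-schedule: you delete $o_r$ and $q_r$ from the route and invoke the triangle inequality of $\dist$ (plus Observation~\ref{OBS1:EQUIVALENCE}) to re-verify constraint~\eqref{tripLP_timeSequence} under the new next-stop relation, whereas the paper simply has the driver traverse the \emph{identical} route at the identical times without picking $r$ up, so that every remaining user's pickup/dropoff times --- and hence every cost term and every constraint --- are literally unchanged and no metric property is needed. Your version buys a slightly stronger conclusion ($c_{d,S\setminus\{r\}} \le c_{dS} - C^{r,\pi^*}$, i.e.\ the driver's own cost may also drop), at the price of the extra feasibility bookkeeping you correctly flag as the delicate step; the paper's version sacrifices that refinement to make feasibility immediate. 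Both arguments are sound.
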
\label{claim:IR}
\begin{proof}
Suppose for contradiction that a rider $r$ is getting utility lower than $\val_r-\lambda_r$ under the min-cost schedule $\pi^*$. Then consider a schedule $\pi'$ where $r$ is not matched and everyone else is matched the same as in $\pi^*$. In other words, we are picking up and dropping off all riders at the same time, except that we are not picking up the rider $r$ (This schedule may contain unnecessary waiting). It is easy to see that $\pi'$ is a feasible schedule. Because we assume that the stops in a trip scheduled, and times to visit each stop all stay the same, thus the time window constraints, maximum detour times are all satisfied for all the drivers and riders matched in the new schedule $\pi'$. Drivers are assigned to pick up the same set of riders or a subsect of riders he matched in $\pi^*$, thus the capacity constraints in $\pi'$ are satisfied. For each rider $r'\neq r$, he is experiencing exactly the same time deviation and geographic distance traveled in the trip, so his utility stays the same. Moreover, the utility for any driver who was not assigned to pick up $r$ stays the same, because his schedule and the riders that he need to serve stays the same. These holds for both the welfare maximization problem and the efficiency maximization problem.

We only need to consider the utility for the driver $d$ who picked up $r$ in $\pi^*$ (but will spare $r$ in $\pi'$). $d$'s utility is $U_d(\pi^*) = \val_d-C^d_{tt'}$ (or $\tilde{U}_d(\pi^*) = \val_d-C^d_{tt'} + \sum_{r\in S_d} \rho_d (v_r-\lambda_r-C^r_{tt'})$). Recall that we are considering the same schedule as $\pi^*$, i.e. the driver is going to exactly same stops at exactly same times. Therefore, he suffers exactly same deviation cost and travel cost, thus we have $U_d(\pi') = U_d(\pi^*)$, i.e. the driver is getting the same base utility. Everyone except $r$ is getting same utilities and $r$ is getting better utility in $\pi'$. (And therefore, $\tilde{U}_d (\pi')\geq \tilde{U}_d(\pi^*)$.) Thus $\pi'$ is strictly superior to $\pi^*$, this is a contradiction because $\pi^*$ is assumed to be the cost-minimization (as well as welfare maximization) schedule.
\end{proof}

\subsection{Proof of Proposition 5: Non-stable Systems}
\usetikzlibrary{arrows,automata}
\begin{center}
    
\begin{tikzpicture}[scale=0.7]
\node[state] at (0,7) (oa) {$o_a$};
\node[state] at (0,5) (ob) {$o_b$};
\node[state] at (0,3) (oc) {$o_c$};

\node[state] at (7,7) (qa) {$q_a$};
\node[state] at (7,5) (qb) {$q_b$};
\node[state] at (7,3) (qc) {$q_c$};

\node[state] at (10,5) (qq) {$q$};

\draw (oa) edge node[above]{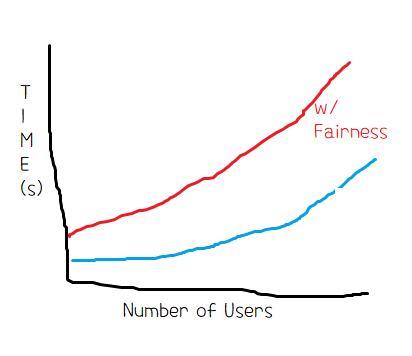} (qa);
\draw (ob) edge node[above]{3} (qb);
\draw (oc) edge node[above]{3} (qc);

\draw (oa) edge node[left]{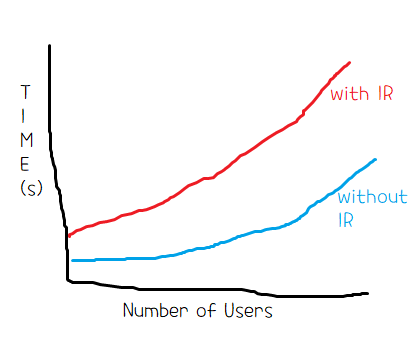} (ob);
\draw (ob) edge node[left]{2} (oc);
\draw (oc) edge[bend left=35] node[left]{2} (oa);

\draw (qa) edge node[left]{2} (qb);
\draw (qb) edge node[left]{2} (qc);
\draw (qc) edge[bend right=35] node[pos=0.7,left]{2} (qa);

\draw (qa) edge node[above]{1} (qq);
\draw (qb) edge node[above,pos=0.7]{1} (qq);
\draw (qc) edge node[above]{1} (qq);

\end{tikzpicture}
\end{center}
\begin{center}
Drivers\quad
\resizebox{\columnwidth}{!}{
\begin{tabular}{ | c | c | c | c | c|c|c|c|c|c|c| }
\hline
     \textcolor{black}{} & \textcolor{black}{ori} & \textcolor{black}{des} & \textcolor{black}{window} & \textcolor{black}{$\tau^*$} & \textcolor{black}{cap} & \textcolor{black}{$c_{dev}$} & \textcolor{black}{$c_{trl}$} & \textcolor{black}{value} & \textcolor{black}{$\rho$} & \textcolor{black}{$\Delta$}\\ 
    \hline
    $d_1$ & $o_a$ & $q$ & $[0,10]$ & $0$ & $2$ & $100$ & $1$ & $1000$ & $1$ & $10$\\
    \hline
    $d_2$ & $o_b$ & $q$ & $[0,10]$ & $0$ & $2$ & $100$ & $1$ & $1000$ & $1$ & $10$\\
    \hline
    $d_3$ & $o_c$ & $q$ & $[0,10]$ & $0$ & $2$ & $100$ & $1$ & $1000$ & $1$ & $10$\\
    \hline
\end{tabular}
}
\end{center}

\begin{center}
Riders\quad
\resizebox{\columnwidth}{!}{
\begin{tabular}{ | c | c | c | c | c|c|c|c|c|c| }
\hline
     \textcolor{white}{} & \textcolor{black}{ori} & \textcolor{black}{des} & \textcolor{black}{window} & \textcolor{black}{$\tau^*$} &  \textcolor{black}{$c_{dev}$} & \textcolor{black}{$c_{trl}$} & \textcolor{black}{value} &  \textcolor{black}{$\Delta$} &  \textcolor{black}{$\lambda$}\\ 
    \hline
    $r_1$ & $o_a$ & $q_a$ & $[1,8]$ & $1$ & $5$ & $1$ & $70$ & $7$ & $70$\\
    \hline
    $r_2$ & $o_b$ & $q_b$ & $[1,8]$ & $1$ & $5$ & $1$ & $70$ & $7$ & $70$\\
    \hline
    $r_3$ & $o_c$ & $q_c$ & $[1,8]$ & $1$ & $5$ & $1$ & $70$ & $7$ & $70$\\
    \hline
\end{tabular}
}
\end{center}

Consider the problem with three drivers $\cD = \{d_1,d_2,d_3\}$ and three riders $\cR = \{r_1, r_2, r_3\}$. Their information is described in above tables.
We will look at matching of size 2, 3, and $\geq 4$. 

A matching of size two ($d_i, r_i$): the driver has to deviate one time step to pick up the rider then drives to rider's destination and his destination. Therefore he suffers a deviation cost of 100 then extra traveled cost. More specifically $c^d = 100+3+1 = 104$. Now we will look at the utility of the rider. $U_r = 70-3 = 67$. Then $\tilde{U}_d = 1000-104+67 \leq 1000-4$; therefore, the driver is not IR so the driver will not pick up just one rider. Note picking up $r_{j\neq i}$ gives a worse utility for both the driver and the rider. Thus matching one-to-one is not a feasible matching.

A matching of size three $(d_i, r_i, r_j)$: the driver deviate 1 time step to pick up $r_i$ then $r_j$ then drop off $r_j$ then $r_i$. Thus getting $c^d = 100+2+3+2+1 = 108$. The utility of riders are $U_{r_i} = 70-7 = 63$ and $U_{r_j} = 70-5\cdot 2-3 = 57$. Thus we get $\tilde{U}_d = 1000-108+63+57$ $(>1000-4)$. Therefore, the driver prefer picking up two riders over not being matched. Also note $r_i$ prefers to get matched with $d_i$ over any other drivers.

Furthermore, note $(d_i, r_j, r_k)$ for $j\neq i$ and $k\neq i$ is not feasible due to time window constraint. Similarly, $(d_i, r_i, r_j, r_k)$ is not feasible due to time window and capacity constraint.

Moreover, note if a driver $d_i$ picks up two riders, then no other driver can pick up riders because there's only one rider remaining. 

Therefore, all feasible matching is in from of $(d_i, r_i, r_j)$, $(d_j,\emptyset)$, $(d_k,\emptyset)$, $(null, r_k)$. However, note $(d_j,r_j,r_k)$ form a blocking pair. Therefore, the instance does not have any stable matching. Thus proving there exists an instance where stable outcome is an emptyset.

\section{Details on Experiments}


We have described the setting for the base experiment in the main text. In this section, we describe the settings of the stress-testing scalability experiment and the set of experiments based on real data in detail. 

\subsection{Large-Sized Experiment}

In addition to our small scale baseline experimental setting (which mimics a typical morning rush hour situation), we also test our algorithm in a large-sized setting(which captures suburbs and rural areas). In this experiment, the origins $o_i$ and destinations $d_i$ of user $i$ are randomly drawn from the uniform distribution within the graph $G$. User earliest departure times $t^e_i$, latest arrival times $t^l_i$, preferred times $t^\star_i$ are randomly drawn from uniform distribution within their feasible time horizons, i.e. $[7:00, 8:00 - \dist_i]$. Latest arrival times $t^l_i$ are the earliest departure times $t^e_i$ plus the minimum traveling time $\dist_i$ multiplied by a window flexibility ratio $1.3$, i.e. $t^l_i = t^e_i + \dist_i*1.3$. The user preferred times $t^\star_i = t^e_i + \sigma, \sigma \sim U[0,0.1\dist_i]$ and altruistic factors $\rho_d = 0, \forall d$. And the maximum detour time is set to $\delta_i = 0.2 \dist$. 
 
The other parameters remain the same as described in the base experiment setting. Figure \ref{subfig:scalsparse} shows scalability results in this setting. The observations are twofold. First, the runtimes scale linearly; second, the algorithm is much more scalable in sparse large-sized settings. Indeed, with a driver/rider ratio of 1, our algorithm terminates in 81 seconds for 300 users (150 drivers and 150 riders) and in 428 seconds with 600. In contrast, when the driver/rider ratio is 4, our algorithm terminates in 362 seconds 300 users (60 drivers and 240 riders) and in 21 minutes for 600 users.

\subsection{Real-Data Experiment}

\begin{figure}[h!]
    \centering
    \frame{\includegraphics[width = 0.42\textwidth]{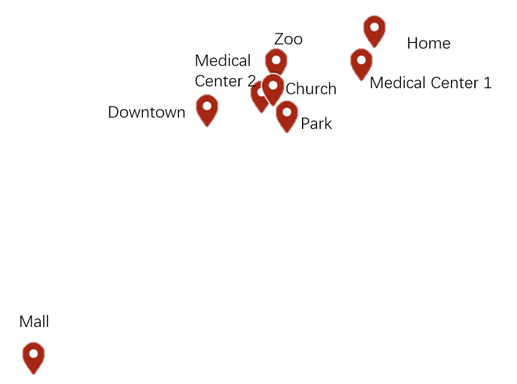}}
    \caption{Locations extracted from a local map used in the real-data experiment}
    \label{fig:HAexample}
\end{figure}

In the third experiment, we tested with problem instances based on information collected from our conversation with residents in a low-income neighborhood where most residents live in subsidized housing. We extracted 8 locations (Home, Church, Medical Center 1\&2, Downtown, Zoo, Park, Shopping Mall) that are frequently visited on a typical weekend. The neighborhood has 78 apartments, and we expect 125 residents to participate in the P2P ridesharing program. 
Based on our conversation with local residents, we learned that around 90\% residents in the neighborhood go to Church on a Sunday morning. Besides the Church trips, the residents may go to Medical Center 1, Medical Center 2, Downtown, Zoo, Park, Shopping Mall with probability 0.1, 0.2, 0.2, 0.1, 0.1, 0.1, respectively. Note that besides the Church trips, with probability 0.2, a resident may not go anywhere. We assume that all driver trips are round-trips, and the length of stay is drawn from $\mathcal{N}(120,60)$. Within the usual travel time window $[7:00, 21:00]$. Based on the length of stay, we draw the earliest departure time $\tau^e_i$ and latest arrival time $\tau^l_i$ uniformly from the feasible time windows. The preferred travel time is set as the earliest departure time, i.e. $\tau^\star_i = \tau^e_i$, for all user $i$. 

In the third experiment, we define $\dist$ as the average travel time in minutes. The travel times between each pair of the 8 locations are proportional to the distances as shown in Figure~\ref{fig:HAexample}, which are average driving time estimated by a well-known commercial statelite map. Travel costs ($c^i_{\text{trl}}$) are set to 1.5 per minute, and deviation costs ($c^i_{dev}$) are set to 0.5 per minute. All drivers have a capacity of $k_d= 4$, and altruistic factor of $\rho_d = 1.2$. The value of the user $i$ is set to $\val_i = c^i_{\text{trl}}\cdot \dist_i\cdot u, u\sim \mathcal{U}[1,2.5]$. Lastly, the unstaisfied cost $\lambda_i$ is set to the travel times of using public transportation.

\end{document}